\newtheorem{proposition}{Proposition}
\newtheorem{remark}{Remark}
\definecolor{lime}{HTML}{A6CE39}
\DeclareRobustCommand{\orcidicon}{%
    \begin{tikzpicture}
    \draw[lime, fill=lime] (0,0) 
    circle [radius=0.16] 
    node[white] {{\fontfamily{qag}\selectfont \tiny ID}};    \draw[white, fill=white] (-0.0625,0.095) 
    circle [radius=0.007];    \end{tikzpicture}
    \hspace{-2mm}}
\xdef\csname orcid\x\endcsname{\noexpand\href{https://orcid.org/\csname orcidauthor\x\endcsname}{\noexpand\orcidicon}}
\begin{document}
%
\title{Robust Inertial-aided Underwater Localization based on Imaging Sonar Keyframes}
%
%
%

\author{Yang Xu, \IEEEmembership{Student Member, IEEE}, Ronghao Zheng, \IEEEmembership{Member, IEEE},  Senlin Zhang, \IEEEmembership{Member, IEEE}, \\
	and Meiqin Liu, \IEEEmembership{Senior Member, IEEE}


\thanks{Manuscript received November 8, 2021; revised January 17, 2022; accepted February 24, 2022. Date of publication February xx, 2021; date of current version March xx, 2021. The work of Xu and Zheng was financed by the NSFC-Zhejiang Joint Fund for the Integration of Industrialization and Informatization under Grant U1909206, Zhejiang Provincial Natural Science Foundation of China under Grant LZ19F030002, Natural Science Foundation of China under Grants 61873235 and 62173294, and a project supported by Scientific Research Fund of Zhejiang Provincial Education Department under Grant Y202148317. This work of Zhang and Liu was financed by the NSFC-Zhejiang Joint Fund for the Integration of Industrialization and Informatization under Grant  U1809212 and the Key Research and Development Program of Zhejiang Province under Grant 2019C03109. The Associate Editor coordinating the review process for this article was Dr. xxxx xxxx. (\textit{Corresponding author: Ronghao Zheng}.)}
	\thanks{Yang Xu, Ronghao Zheng and Senlin Zhang are with the College of Electrical Engineering, Zhejiang University, Hangzhou 310027, China,
		and also with the State Key Laboratory of
		Industrial Control Technology, Zhejiang University, Hangzhou 310027, China (e-mail: xuyang94@zju.edu.cn, rzheng@zju.edu.cn, slzhang@zju.edu.cn).}
	\thanks{Meiqin Liu is with the Institute of Artificial Intelligence and Robotics, Xi'an Jiaotong University, Xi'an 710049, China, and also with the State Key Laboratory of Industrial Control Technology, Zhejiang University, Hangzhou 310027, China (e-mail: liumeiqin@zju.edu.cn).}
	\thanks{Digital Object Identifier.}
}

%
%

\markboth{Journal of \LaTeX\ Class Files,~Vol.~14, No.~8, August~2015}%
{Xu \MakeLowercase{\textit{et al.}}: Robust Inertial-aided Underwater Localization based on Imaging Sonar Keyframes}
%



\maketitle

\begin{abstract}
This article focuses on the feature-based underwater localization and navigation for autonomous underwater vehicles (AUVs) using 2D imaging sonar measurements. The sparsity of underwater acoustic features and the loss of elevation angle in sonar images may introduce wrong feature matches or insufficient features for optimization-based underwater localization (i.e. under-constrained/degeneracy cases). This motivates us to propose a novel inertial-aided sliding window optimization framework to improve the estimation accuracy and the robustness to front-end outliers. Concretely, we first discriminate under-constrained/well-constrained sonar frames and define sonar keyframes (SKFs) based on the Jacobian matrix derived from odometry and sonar measurements. To utilize the past well-constrained SKFs mostly, we design a size-adjustable windowed back-end optimization scheme based on singular values. We also prove that the landmark triangulation failure (navigation problem) caused by sonar motion can be solved in 2D scenes. Comparative simulation and evaluation on a public dataset show the proposed method outperforms the existing ones in pose estimation and robustness even without loop closure, and also ensure the real-time performance for online applications.
\end{abstract}

\begin{IEEEkeywords}
Autonomous underwater vehicle (AUV), Localization, Imaging sonar, Keyframe, Underwater navigation
\end{IEEEkeywords}

%
\IEEEpeerreviewmaketitle

\section{Introduction}
\label{sec:introduction}
\IEEEPARstart{U}{nderwater} autonomous localization and navigation with no prior map become more essential for autonomous underwater vehicles (AUVs) to complete various kinds of missions. However, the electromagnetic signals from the global positioning system attenuate quickly underwater, and the external underwater acoustic positioning systems (e.g. ultra-short/short/long baseline) are restricted by the limited and expensive beacons~\cite{paull2013auv,wu2019survey,zhang2021efficient}, thus the internal aided navigation (IAN) systems are more preferred. 

Among the IAN systems, the widely used inertial navigation system (INS) accumulates error rapidly over time, and the visually augmented navigation (VAN) systems are always subject to close sensing range, visibility, and illumination conditions {since many feature matching failures may occur in complex underwater environments such as turbid water and visual feature-less regions~\cite{eustice2008visually,yang2020extrinsic}. However, acoustic sensors such as side-scan sonars, multibeam sonars, and imaging sonars are more appropriate in these scenes,} among which the imaging sonars perform better in underwater object detection, collision avoidance, and acoustic visual navigation \cite{teng2020robust}.

Several practices only use imaging sonar image sequences to estimate sonar poses, such as 2D optical flow-based method \cite{henson2018attitude} and 3D motion estimation \cite{sekkati20073,negahdaripour20133}. 
Assuming a planar seafloor, Negahdaripour~\cite{negahdaripour2008bundle} utilized multiple view bundle adjustment (BA) to estimate the 3D sonar motion just from sonar image flow containing 3D objects and their shadows on the seafloor. However, underwater pose estimation only using imaging sonars may get less reliable results than the one using additional multi-sensor measurements because of the feature sparsity and low resolution in noisy sonar images.

To better locate and navigate the AUVs by combining imaging sonar and other navigation sensors, {extended Kalman filter (EKF)-based} estimation methods have already been applied to fuse multi-sensor measurements in semi-structured or structured underwater environments such as harbors, marinas, and ship hull inspection~\cite{ribas2008underwater, mallios2014scan, walter2008slam}. Mallios \textit{et al.} utilized an EKF to estimate the local pose increment to correct the acoustic image distortions produced by vehicle motion, and an augmented state EKF to estimate and keep {the registered scans' poses} \cite{mallios2010probabilistic}. 
Chen \textit{et al.} \cite{chen2019rbpf} proposed a {Rao-Blackwellized particle filter-based simultaneous localization and mapping (SLAM)} algorithm for an AUV equipped with a slow mechanically scanning imaging sonar. 
Lin \textit{et al.} \cite{lin2020gated} presented a gated recurrent unit-based particle filter using sonar measurements to improve the state estimation of AUVs.

Nevertheless, the main defects of EKF-based underwater SLAM approaches involve the increasing scale of state vector and the covariance matrix and a resulting higher computational complexity, especially in a large underwater scene \cite{li2018robust, sekkati20073}. 

Instead, acoustic BA methods{\cite{negahdaripour2008bundle,ShinBundle}} can fuse multi-source measurements in an optimization way, which outperforms in precision and robustness than EKF-based methods when processing potential loop closure and front-end failure.

\section{Related Works}
\label{relate}
In an earlier time, Shin \textit{et al.}~\cite{ShinBundle} proposed a two-view acoustic bundle adjustment (ABA) framework for an AUV mapping the seafloor. 
Notably, the special acoustic imaging principle and the missing vertical bearing angle are found to bring about the spurious motion, i. e. the inherent ambiguities of 3D motion and scene structure interpretation, as Negahdaripour explored and analyzed in \cite{negahdaripour2012visual} from 2D forward imaging sonar image sequences using multi-view geometry. More recently, Huang \textit{et al.} \cite{huang2015towards, huang2016incremental} observed several landmark degeneracy cases when sonar moves in special directions.  
In other words, the sparsity of underwater acoustic features \cite{westman2018feature} and the loss of elevation angle in 2D sonar images \cite{pyo2019site}, degeneracy cases occur in SLAM back-end when it comes to wrong data association and insufficient sonar measurements, namely under-constrained cases in the back-end optimization~\cite{westman2019degeneracy,huang2015towards}, such as relative pose ambiguity and landmark cannot be triangulated. This is also called state unobservable in EKF-based SLAM~\cite{yang2017acoustic,yang2019observability}.

To solve this, Huang and Kaess \cite{huang2015towards, huang2016incremental} presented the acoustic structure from motion (ASFM) method to optimize both sonar poses and 3D feature positions in this circumstance. They also used onboard navigation sensors such as IMU and {Doppler Velocity Log (DVL)} to mitigate the degeneracy phenomenon.
Additionally, with the learning-based loop closure mechanism, Li \textit{et al.} \cite{li2018pose} proposed a pose-graph imaging sonar SLAM scheme based on ASFM to handle the degeneracy cases and improve accuracy in ship hull inspection.
However, successful loop closure mainly depends on rich acoustic features, while may fail in the unstructured scene and bring even larger errors. The useful constraints in previous sonar frames have also been neglected.

So far, how to extract and utilize constraints from past sonar images to enhance the accuracy and robustness of the two-view ABA to outliers (even without loop closure), then achieve the long-term underwater localization and navigation is still an open problem.

Motivated by these related works~\cite{ShinBundle,westman2018feature}, in this paper, we propose a new approach to enhance the accuracy and robustness of the two-view ABA using imaging sonars. The main contributions can be summarized as follows:
{\begin{enumerate}
    \item As an extension of previous work \cite{xu2020keyframe}, using the information matrix derived from odometry and sonar measurements, we further analyze and prove the landmark triangulation failure caused by special sonar motions can be eliminated in 2D scenes (navigation problem). 
    \item On this theoretical basis, we propose an approach to discriminate the under-constrained/well-constrained sonar frames using the singular values of the Jacobian matrix, and define the sonar keyframes (SKFs) to be chosen.
    \item Moreover, we present a novel inertial-aided sliding window optimization framework to improve the pose estimation accuracy and the robustness to front-end outliers and make full use of the past well-constrained SKFs when confronting degeneracy cases. The size-adjustable window optimization also ensures time efficiency in field applications.
\end{enumerate}}

Additionally, simulations carried by a simulated remotely operated vehicle (ROV) equipped with an imaging sonar show significant improvement in the accuracy of pose and landmark estimation, especially the robustness to outliers from front-end, when confronting feature sparsity and wrong feature matches. The evaluation of the marina dataset also validates our method outperforms two-view ABA and INS DR by decreasing 1.38 m and 5.95 m of average trajectory root mean square errors (RMSEs) in a $110~m\times200~m$ trajectory estimation.

The rest of this paper is organized as follows. We present the inertial-aided 2D imaging sonar localization and navigation in Section~\ref{ins} and analyze the degeneracy cases in Section~\ref{sec3}. In Section~\ref{sec4}, we define the SKFs and propose a SKF-based elastic windowed optimization scheme. Comparative experiment results and discussions are given in Section~\ref{sec5}. Concluded remarks and future work are provided in Section~\ref{sec6}.

\section{Inertial-aided Imaging Sonar Localization and Navigation}
\label{ins}
In this section, we consider a feature-based localization and navigation problem using imaging sonar and low-cost inertial navigation sensors.
We briefly introduce the imaging sonar model and the inertial model within the factor-graph optimization framework as follows, which will be the basis for our proposed method.
\subsection{Imaging Sonar Model}
A typical 2D forward-looking imaging sonar model is shown in Fig.~\ref{sonar}. 
\label{sec21}
\begin{figure}
	\centering
	\includegraphics[width=0.35\textwidth]{./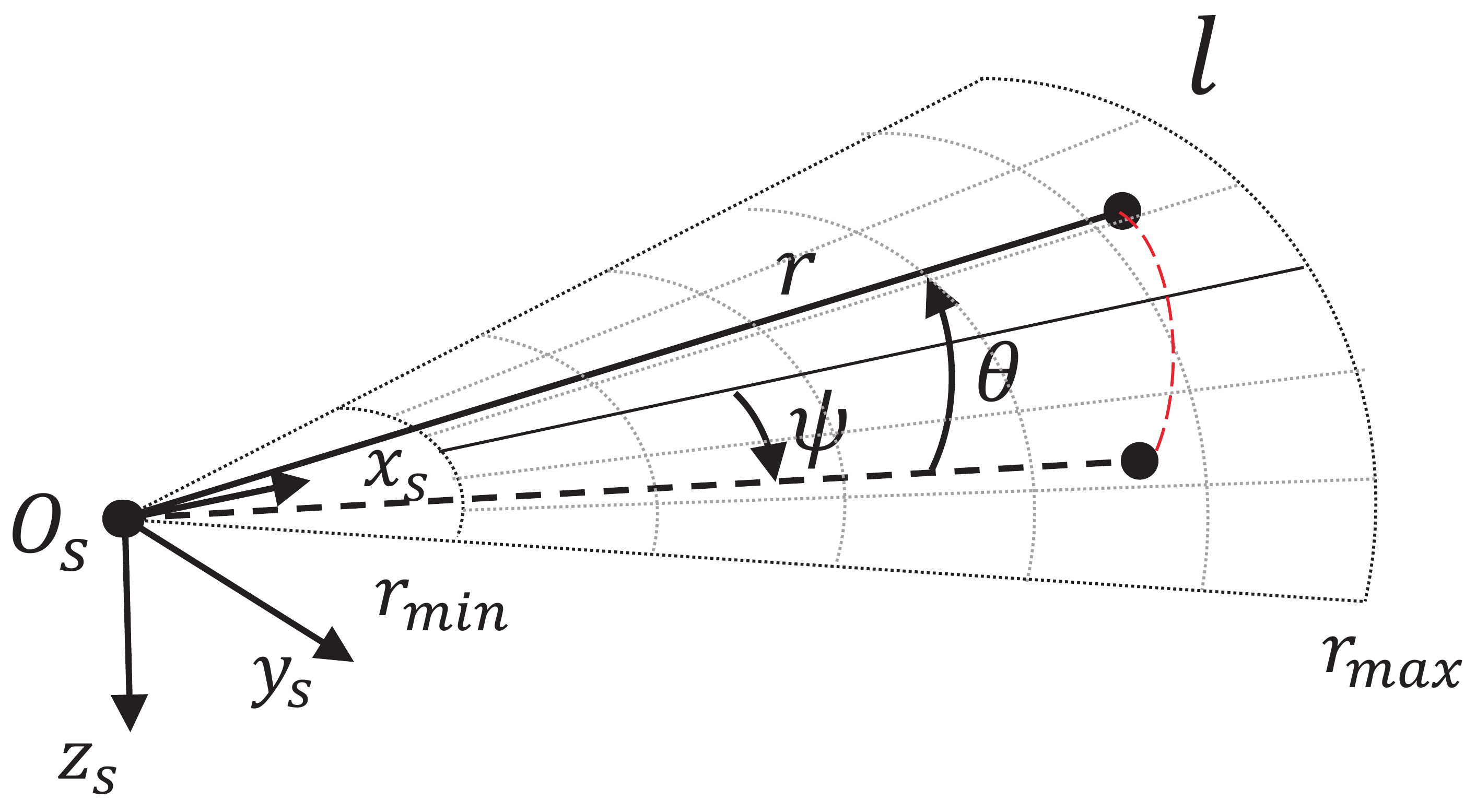}
	\caption{A typical imaging sonar model. All points on the red arc will be projected into a single pixel point on the 2D sonar image and the elevation angle $\theta$ is lost in the sonar image.}
	\label{sonar}       
\end{figure}
Here we use the sonar spherical coordinates to parameterize a feature point $l =[\psi,r,\theta]^\mathrm T$, thus the coordinates transformation can be described as follows:
\begin{align}\label{eq:model}
	&\pi(\mathbf p^l)=\left[\begin{matrix}\psi\\r\end{matrix}\right]=\left[\begin{matrix} \mathrm{atan2}(p_{y}^l,p_{x}^l)\\ \|\mathbf p^l\|_2^2 \end{matrix}\right],\\ \nonumber
	&\mathbf p^l=\left[\begin{matrix}p_{x}^l,~p_{y}^l,~p_{z}^l\end{matrix}\right]^{\mathrm{T}}=\left[\begin{matrix} r \cos \psi \cos \theta\\r \sin \psi \cos \theta\\ r \sin \theta \end{matrix}\right],
\end{align}
where $\mathbf p^l$ is the sonar Cartesian coordinates of point $l$, $r\in[r_{min},r_{max}]$ is the measured range, and $\psi$ is the horizontal bearing angle. $\pi(\cdot)$ is the projection function which projects $l$ to a single pixel point at the 2D sonar image.
Note that the sonar images lost the elevation angle $\theta$ of acoustic features in the 3D-2D projection. The sonar vertical aperture and the horizontal field of view (FOV) meets $\theta\in[\theta_{min},\theta_{max}]$ and $\psi\in[\psi_{min},\psi_{max}]$, respectively.

\subsection{Underwater Inertial Navigation}
\label{sec:ian}

Consider an IMU kinematic model as follows \cite{miller2010autonomous,forster2016manifold}:
\begin{align}
    &\dot{\mathbf{p}}_W(t)=\mathbf{v}_W(t),\\
    &\dot{\mathbf{v}}_W(t)=\mathbf{R}_{WB}^\mathrm{T}(t)(\mathbf{\tilde a}^m_{B}(t)-\mathbf{b}_a(t)-\boldsymbol{\eta}_a(t))+\mathbf g_W,\\
	&\dot{\mathbf{R}}_{WB}(t)=\mathbf{R}_{WB}(t) \Omega \left(\mathbf{\tilde \omega}_{WB}^m(t)-\mathbf{b}_g(t)-\boldsymbol{\eta}_g(t)\right),\\
	&\dot{\mathbf{b}}_g(t)=\boldsymbol{\eta}_g(t),~\dot{\mathbf{b}}_a(t)=\boldsymbol{\eta}_a(t),
\end{align}
where $\mathbf{p}_W$ and $ \mathbf{v}_W$ are the position and velocity in the world frame $W$, respectively. $\mathbf{R}_{WB}$ is the rotation matrix from world frame to body frame $B$, $\mathbf{b}_g$ and $\mathbf{b}_a$ are the biases of gyroscope and accelerometer from IMU. 
$\Omega(\omega)$ is the skew-symmetric operator for a vector $\omega\in \mathbb{R}^{3\times1}$. $\mathbf{\tilde \omega}_{WB}^m$ and $\mathbf{\tilde a}_{B}^m$ are the angular velocity and linear acceleration measured directly by IMU. $\mathbf g_W$ is the gravity acceleration constant vector in the world frame. $\boldsymbol{\eta}_g$ and $\boldsymbol{\eta}_a$ are the Gaussian noises in the IMU measurements.

Fig.~\ref{time} shows the differences of update frequency, where IMU updates faster than the imaging sonar frames, and the sonar keyframes are chosen every several frames. 
It is worth noting that if not using the relative pose increment between two consecutive sonar frames but the absolute pose estimation at every frame, the pose error caused by dead reckoning will accumulates fast and get significant.
Therefore, we only concern about the pose increment $\Delta \mathbf{p}_W$  and $\Delta \mathbf{R}_{WB}$ in a time interval $[t,t+\Delta t]$:
\begin{align}
	&\Delta \mathbf{v}_W = \mathbf{R}_{WB}^{\mathbf T}(t)\mathbf{s}(t)+{\mathbf{g}}_W\Delta t,\\
	&\Delta \mathbf{p}_W =  \mathbf{v}_W(t)\Delta t+\mathbf{R}_{WB}^{\mathbf T}(t)\mathbf{y}(t)+\frac{1}{2}\mathbf{g}_W\Delta t^2,\\
	&\mathbf{R}_{WB}(t+\Delta t) = \mathbf{R}_{WB}(t)\Omega(\mathbf{u}(t)),
\end{align}
where $\mathbf{s},~\mathbf{y}$ and $\mathbf{u}$ are integral terms \cite{li2013high}.

\subsection{Feature-based Two-view Acoustic BA}
\label{sec22}
Assuming Gaussian noises in the odometry and sonar measurements, we can get the following measurement functions:
\begin{align}
	\mathbf z_{odom}^i&=\boldsymbol f(x_{i-1},x_i)+\mathcal{N}(0,\mathbf \Lambda_i), \\
	\mathbf z_{sonar}^k&={\boldsymbol h}(x_{ik},l_{jk})+\mathcal{N}(0,\mathbf \Sigma_k),
\end{align}
where $x_i=[\psi_{x_i},\varphi_{x_i},\phi_{x_i},t_{x_i}^x,t_{x_i}^y,t_{x_i}^z]^\mathrm T(i=1,2,3,\dots)$ represent sonar pose and $l_j(j=1,\dots,M)$ {denotes the feature coordinate defined on previous pose}. $\mathbf z_{sonar}^k(k=1,\dots,N)$ is the $k$th sonar measurement associated with pairwise pose $x_{ik}$ and feature $l_{jk}$. For notion brevity, here we use $i$ to denote the current frame at $t+\Delta t$ and $i-1$ for frame at $t$.

\begin{figure}
	\centering
	\includegraphics[width=0.4\textwidth]{./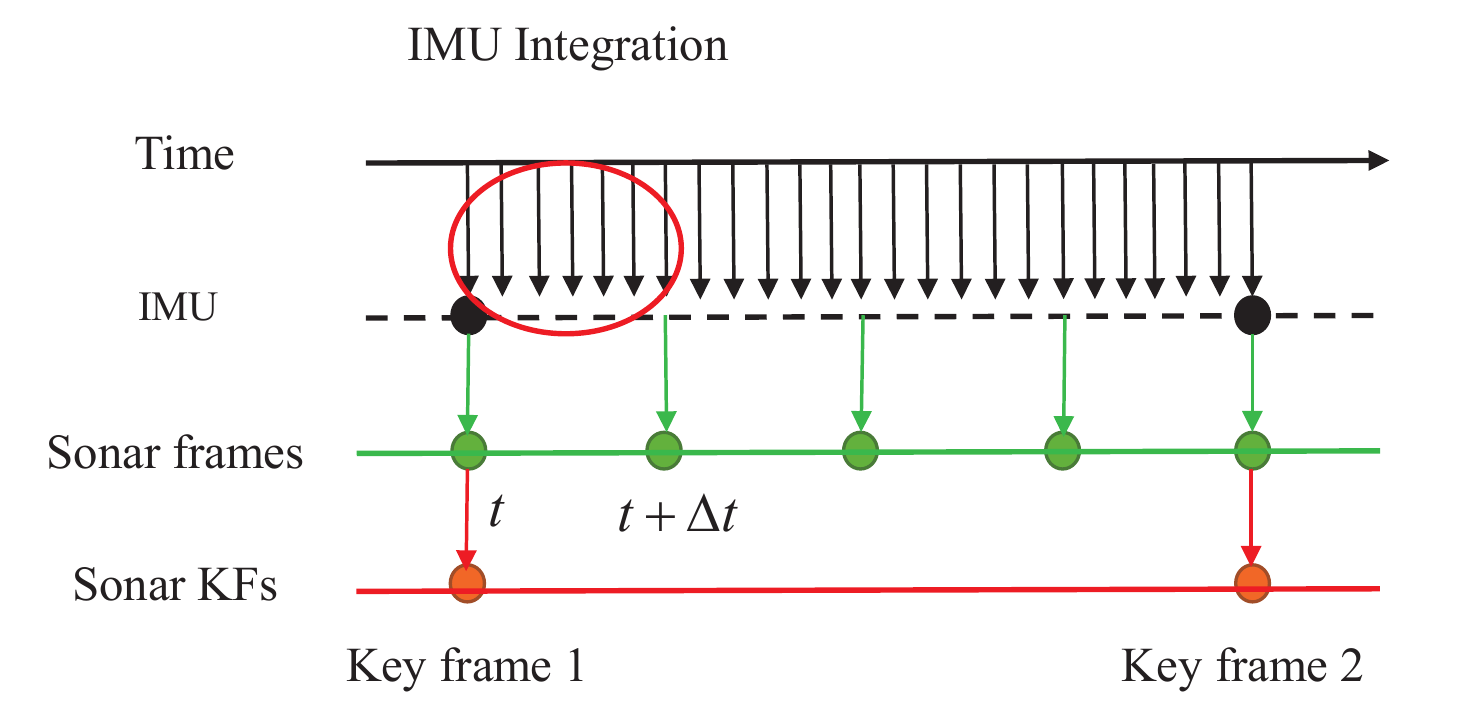}
	\caption{An illustration of different sensor update rates.
		The numerically pre-integrated pose increment in the short time interval $[t, t+\Delta t]$ of two consecutive sonar frames is more accurate and reliable instead of the continuous increment of odometry measurement.}
	\label{time}       
\end{figure}
The odometry prediction function represented by $\boldsymbol f(\cdot)$ is acquired from the inertial navigation in Sect.~\ref{sec:ian}.
The sonar prediction function $\boldsymbol h(x_{ik},l_{jk})$ transforms the features $l_{jk}$ associated with pose $x_{ik}$ onto sonar polar coordinates by using $\pi(\cdot)$ defined in Eq.~\eqref{eq:model}, then gets the predicted 2D polar coordinates $\hat{\mathbf z}_{sonar}^k$.
\begin{figure*}
	\centering
	\includegraphics[width=0.8\textwidth]{./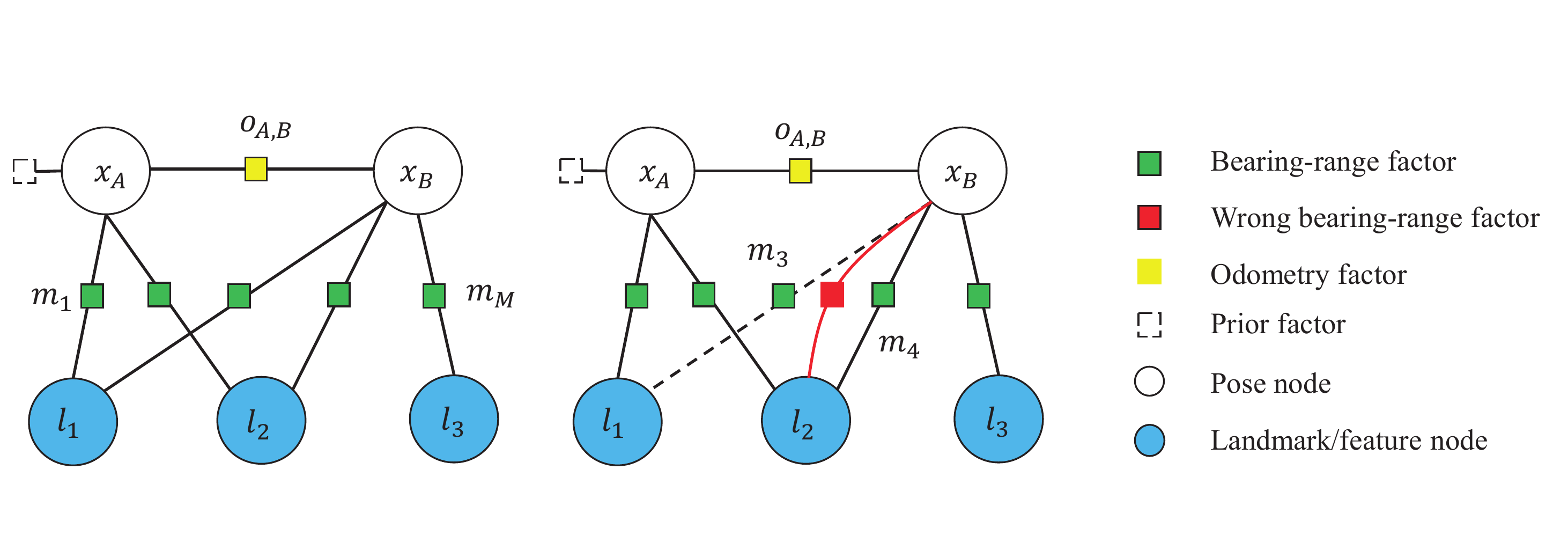}
	\caption{Illustration of factor-graph based two-view ABA. \emph{Left}: standard two-view ABA, \emph{Right}: two-view ABA with wrong association. Sonar measurement factors (green squares) connect one pose (white circle) and one landmark (blue circle). Odometry factor (yellow square) connects two consecutive poses. The prior factor is depicted as a dashed square. The red square factor and dashed line represent a wrong association.}
	\label{factorgraph}       
\end{figure*}
%

It is worth noting that in two-view ABA, the previous pose $x_{i-1}$ denoted by $x_A$ is set to be constant zero as a reference, and the current pose $x_i$ denoted by $x_B$ represents the relative pose between these two consecutive sonar frames. The prior factor is omitted here.
Therefore, for each feature point $l_j$ measured at $x_A$, the predicted sonar measurement at pose $x_B$ is as follows:
\begin{align}
	& \hat{\mathbf z}_{sonar}^k=\boldsymbol h(x_{ik},l_{jk})=\pi(\mathbf q_{jk}),\\ \nonumber
	& \left[\begin{matrix} \mathbf q_{jk}\\1 \end{matrix}\right]= \mathbf T_{x_B}^{-1} \left[\begin{matrix} \mathbf p_{jk}^l\\1 \end{matrix}\right], \mathbf T_{x_B}=\left[\begin{matrix} \mathbf R_{x_B}&\mathbf t_{x_B}\\ \mathbf 0^T&1 \end{matrix}\right],
\end{align}
where $\mathbf p_{jk}^l$ is the feature's Cartesian coordinates at sonar pose $x_A$ and $\mathbf q_{jk}$ is the corresponding reprojected coordinates at pose $x_B$, $\mathbf T_{x_B}\in \mathbb{R}^{4\times 4}$ is the homogeneous transformation matrix, $\mathbf R_{x_B}=\mathfrak{Rot}(\psi_{x_B},\varphi_{x_B},\phi_{x_B})$ and $\mathbf t_{x_B}=[t_{x_B}^x,t_{x_B}^y,t_{x_B}^z]^T$ is the relative 3D rotation matrix and translational vector, respectively.

Therefore, the ABA problem converts into a Maximum a Posteriori (MAP) problem: Given all the sonar and odometry observations $\mathbf z=\{\mathbf z_{sonar}^1,\dots,\mathbf z_{sonar}^k,\dots,\mathbf z_{sonar}^N,\mathbf z_{odom}\}$ at $x_A$ and $x_B$, respectively, with no proir knowledge, find a maximum posterior probability set of poses and landmarks $\mathbf \Theta=\{x_B,l_1,\dots,l_M\}$ according to the Bayesian Rule:
\begin{align}\label{eq:MAP}
	\Theta^*&=\mathop{\arg\max}_{\Theta} p(\mathbf \Theta|\mathbf z) \propto \mathop{\arg\max}_{\Theta} p(\mathbf z | \mathbf \Theta) \\ \nonumber
	&=\mathop{\arg\max}_{\Theta} \prod_{k=1}^N p(
	\mathbf z_{sonar}^k, \mathbf{z}_{odom}^i
	|\mathbf \Theta 
	).
\end{align}

As the left sub-figure in Fig.~\ref{factorgraph}, we introduce the factor graph optimization~\cite{FactorGraph} to model the above MAP problem. Since the conditional probability density function of each measurement can be expressed as  negative exponential form:
\begin{align}
	p(\mathbf z_{sonar}^k|x_{ik},l_{jk}) \propto \exp
	\left\{-\frac{1}{2}\|\boldsymbol h-\mathbf z_{sonar}^k\|_{\mathbf \Sigma_k}^{2}
	\right\},\\ \nonumber
	p(\mathbf z_{odom}^i|x_{i-1},x_{i}) \propto \exp
	\left\{-\frac{1}{2}\|\boldsymbol f-\mathbf z_{odom}^i\|_{\mathbf \Lambda_i}^{2}
	\right\},
\end{align}
then we can get a nonlinear least-squares (NLS) problem by taking the natural logarithm of Eq.~\eqref{eq:MAP}:
\begin{align}\label{eq:nls}
	\Theta^{*}&=\underset{\Theta}{\operatorname{argmin}}\sum_{k=1}^{N}\left\| \boldsymbol h\left(x_{ik}, l_{jk}\right)-\mathbf z_{sonar}^k\right\|_{\mathbf \Sigma_{k}}^{2}\\ \nonumber
	&+\sum_{i=1}^{N}\left\| \boldsymbol f\left(x_{i-1}, x_{i}\right)-\mathbf z_{odom}^i\right\|_{\mathbf \Lambda_{i}}^{2}.
\end{align}
where $||\cdot||^2_{\Sigma}$ denotes the Mahalanobis norm. Note that the odometry term can be incorporated in the relative pose $x_B$ to be estimated.

Applying first-order Taylor expansion for Eq.~\eqref{eq:nls} at the linearized point $ \Theta^0=\{x_{i}^0,l_{j}^0\}$ and transforming the Mahalanobis norm to Euclidean norm, we can get a linear least-squares (LLS) problem as follows: 
\begin{align}\label{eq:lls}
	\mathbf \Delta^*& \approx \mathop{\arg \min}_\Delta \sum_{k=1}^N \left\|\boldsymbol h( \Theta^0)+ \mathbf H_k \mathbf \Delta-\mathbf z_{sonar}^k\right\|_{\mathbf \Sigma_{k}}^{2}\\ \nonumber
	&=\mathop{\arg \min}_\Delta \sum_{k=1}^N \left\|\mathbf A_k \mathbf \Delta-\mathbf b_k \right\|^2
	=\mathop{\arg \min}_\Delta \left\|\mathbf A \mathbf \Delta-\mathbf b \right\|^2, \nonumber
\end{align}
where 
$\mathbf A_k=\mathbf \Sigma_k^{-1/2}\mathbf H_k ,~\mathbf b_k=\mathbf \Sigma_k^{-1/2}(\mathbf z_{sonar}^k-\mathbf h( \Theta^0))$
are the \textit{whitened} Jacobian matrix and the error vector, respectively. $\mathbf \Sigma_k^{-1/2}$ is the square-root information matrix and
 $\mathbf \Delta^*=\Theta- \Theta^0$ 
is the update vector to be solved.

The closed-form $\mathbf{A}$ is more appropriate in back-end optimization for its computational efficiency, rather than numerical computation iteratively, i.e. ($k$ is omitted here):
\begin{align}\label{eq:jacobian}
	\frac{\partial \boldsymbol h(x_{i},l_{j})}{\partial x_{i}}&=\frac{\partial \boldsymbol h(x_{i},l_{j})}{\partial \mathbf q_j} \frac{\partial \mathbf q_j}{\partial x_i},\\ \nonumber
	\frac{\partial \boldsymbol h(x_{i},l_{j})}{\partial l_{j}}&=\frac{\partial \boldsymbol h(x_{i},l_{j})}{\partial \mathbf q_j} \frac{\partial \mathbf q_j}{\partial \mathbf p_j^l} \frac{\partial \mathbf p_j^l}{\partial l_j},
\end{align}
where, specifically, the sub-matrices composing a whole Jacobian matrix are as follows:
\begin{align}
	&\frac{\partial \boldsymbol h(x_A,l_j)}{\partial x_B}=\mathbf{0}_{2\times 6},~
	\frac{\partial \boldsymbol h(x_A,l_j)}{\partial l_j}=\left[\begin{matrix} 1 &0 &0\\0 &1 &0 \end{matrix}\right],\\ \nonumber
	&\frac{\partial \boldsymbol h(x_B,l_j)}{\partial x_B}=
	\left[\frac{\partial \boldsymbol h(x_B,l_j)}{\partial \mathbf q_j}\right] \begin{bmatrix} \Omega(\mathbf q_j) & -\mathbf I_{3\times3} \end{bmatrix},\\ \nonumber
	&\frac{\partial \boldsymbol h(x_B,l_j)}{\partial l_j}= 
	\left[\frac{\partial \boldsymbol h(x_B,l_j)}{\partial \mathbf q_j}\right]
	\left[\mathbf R_{x_B}^T \right]
	\left[\frac{\partial \mathbf p_j^l}{\partial l_j}\right],\\ \nonumber
	&\frac{\partial \boldsymbol h(x_B,l_j)}{\partial \mathbf q_j}=\begin{bmatrix} -\frac{q_{jy}}{q_{jx}^2+q_{jy}^2} &\frac{q_{jx}}{q_{jx}^2+q_{jy}^2} &0\\\frac{q_{jx}}{\|\mathbf q_j\|_2} &\frac{q_{jy}}{\|\mathbf q_j\|_2} &\frac{q_{jz}}{\|\mathbf q_j\|_2} \end{bmatrix},\\ \nonumber
	&\frac{\partial \mathbf p_j^l}{\partial l_j}=\begin{bmatrix} -r_j\cos \theta_j \sin \psi_j &\cos \psi_j \cos \theta_j &-r_j\cos \psi_j \sin \theta_j\\r_j\cos \theta_j \cos \psi_j &\sin \psi_j \cos \theta_j &-r_j \sin \psi_j \sin \theta_j \\ 0&\sin \theta_j &r_j \cos\theta_j \end{bmatrix}, \nonumber
\end{align}
where $\Omega(\mathbf q_j)$ is a skew-symmetric matrix as:
\begin{align}
	\Omega(\mathbf q_j)=
	\left[
	\begin{matrix}
		0 & -q_{jz} & q_{jy} \\
		q_{jz} & 0 & -q_{jx} \\
		-q_{jy}& q_{jx} & 0 
	\end{matrix}
	\right],
	\mathbf q_j=
	\left[
	\begin{matrix}
		q_{jx} \\
		q_{jy} \\
		q_{jz} 
	\end{matrix}
	\right].
\end{align}

Then we can get the closed-form $\mathbf A$ by splicing these sub-matrices along the column representing poses and landmarks and the row representing sonar measurements. A typical example is shown in Fig.~\ref{jacob}(a), the factor graph is corresponded to the left figure of Fig.~\ref{factorgraph}.

\section{Degeneracy Cases Analysis In Two-view ABA}
\label{sec3}
\subsection{Relative Pose Ambiguity}
\label{sec31}
We first consider the degeneracy case of relative pose ambiguity between two viewpoints with 6 DOFs. Especially, the sparse insufficient observations and wrong feature matches may directly lead to this. We present the necessary condition for the number of needed landmark observations between two sonar frames from the existence on the solutions of LLS problem mentioned above.

Generally, in the two-view ABA optimization, we assume all $M$ landmarks can be observed from two poses so that the maximum number of all known observation equations is $4M$. {There are $6+3M$ unknown variables containing 6 from pose $x_B=[\psi_{x_B},\varphi_{x_B},\phi_{x_B},t_{x_B}^x,t_{x_B}^y,t_{x_B}^z]^\mathrm T$ and $3M$ from the position $\mathbf p^l = \left[\begin{matrix}p_{x}^l,~p_{y}^l,~p_{z}^l\end{matrix}\right]^{\mathrm{T}}$ of $M$ landmarks.} The necessary condition to ensure the solvability of the LLS is:
\begin{equation}
	4M\geq 6+3M,
\end{equation}
i.e., $M \geq 6$, which means that if the number of features is less than 6, the back-end LLS problem, i.e. Eq.~\eqref{eq:lls}, cannot be fully constrained. This exactly implies that too sparse features may lead to underdetermined LLS and pose degeneracy. Fortunately, we can introduce more absolute sensors to reduce the number of unknown variables so that the required features can be fewer.

Besides, an alternative approach referred to the numerical optimization theory can be used to solve the underdetermined LLS problem, requiring the normal equation: 
\begin{align}\label{eq:normal}
	\mathbf A^{\rm T} \mathbf A \mathbf \Delta^* = \mathbf A^{\rm T} \mathbf b,
\end{align}
where $\mathbf A\in\mathbb R^{p\times q}(p,~q \in \mathbb{N}^+)$ is a full column rank Jacobian matrix, i.e. $p\geq q$. $\mathbf A^{\rm T} \mathbf A$ is the positive definite Hessian matrix. A conventional solution to this equation is to introduce the pseudo inverse $(\mathbf A^{\rm T} \mathbf A)^{-1}\mathbf A^{\rm T}\mathbf b$. In addition, the QR and Cholesky decomposition methods are more preferred in the sparse optimization problem such as SLAM \cite{nocedal2006numerical}. 

However, the above-mentioned methods cannot handle the degeneracy caused by the inevitable front-end perturbations or even wrong feature matches even using modern front-end algorithms. 
Here we raise an intuitive example. Consider a wrong association between two consecutive sonar frames. As the right figure in {Fig.~\ref{factorgraph}}, $x_i(i=A,~B)$ and $l_j(j=1,2,3)$ are neighboring poses and landmarks marked with white and blue circles respectively, and $m_k(k=1,\dots,M, M=5)$ are sonar measurements marked with green squares. The dashed line between $l_1$ and $x_B$ depicts that $m_3$ is expected to associate with them, but actually, it associates with $l_2$ and $x_B$ due to the wrong feature match. The red square marker represents the actual wrong association. 

More specifically, the resulting effects on the back-end optimizer of wrong association are evidently embodied in the underlying Jacobian matrix.  {As in Fig.~\ref{jacob}, the green rectangles represent the Jacobian sub-matrices with respect to the measurement factor $m_k$ and the landmark $l_j$, and orange rectangles stand for the Jacobian sub-matrices related to the measurement factor $m_k$ and pose $x_i$, and the white ones are zero sub-matrices for irrelevant measurement factors and variables.} The immediate adverse effect is that the rows of the whole Jacobian matrix related to $m_3$ and $m_4$ become all same, i.e., the actual Jacobian matrix structure in Fig. \ref{jacob}(a) becomes rank-deficient in Fig. \ref{jacob}(b) at the optimizer. Consequently, the Hessian matrix also gets no longer positive definite, leading to unstable numerical computation and larger error.
\begin{figure}
	\centering
	\subfigure[]{
		\begin{minipage}[t]{0.45\linewidth}
			\centering
			\includegraphics[width=1.55in]{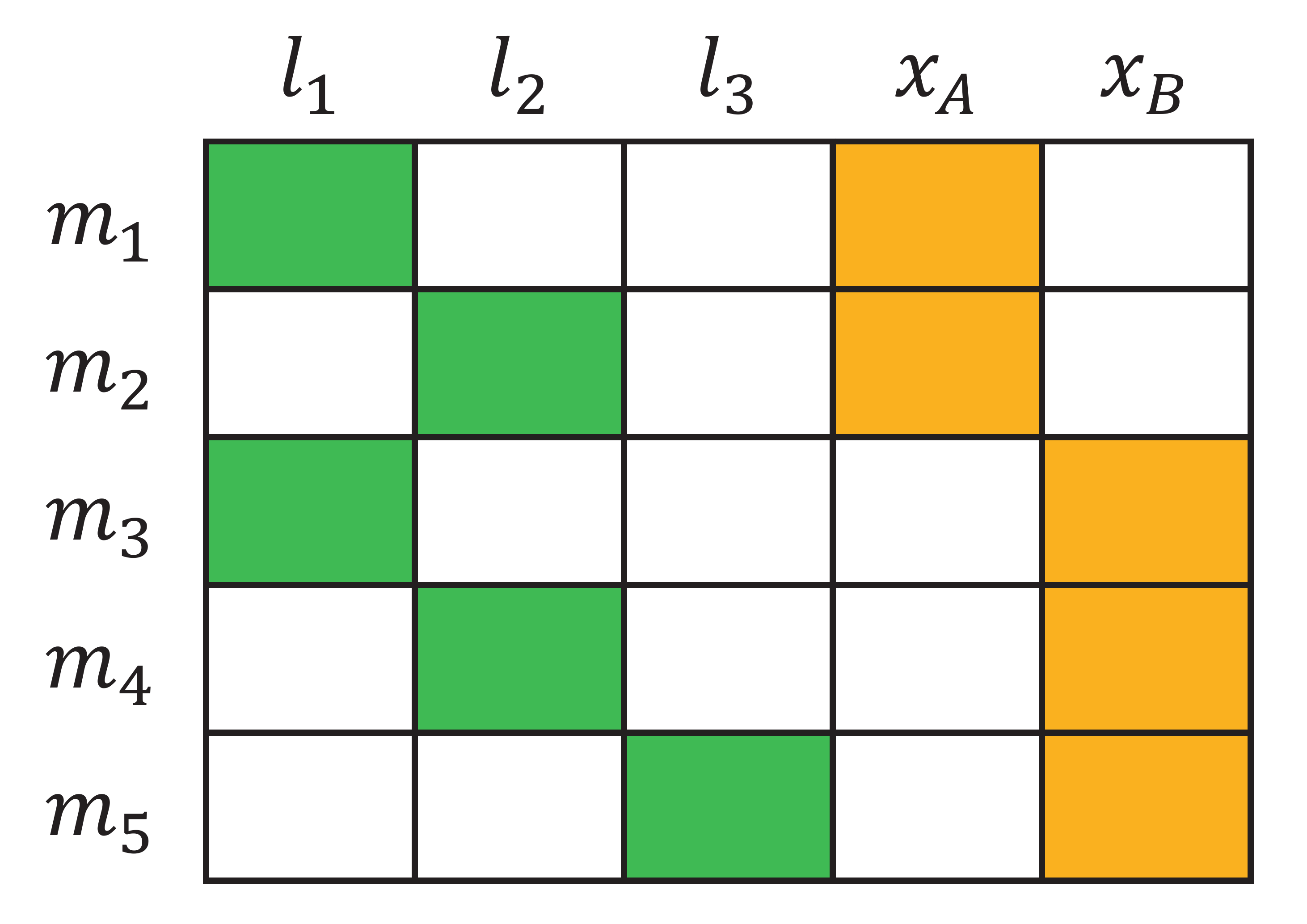}
		\end{minipage}%
	}%
%
	\subfigure[]{
		\begin{minipage}[t]{0.5\linewidth}
			\centering
			\includegraphics[width=1.55in]{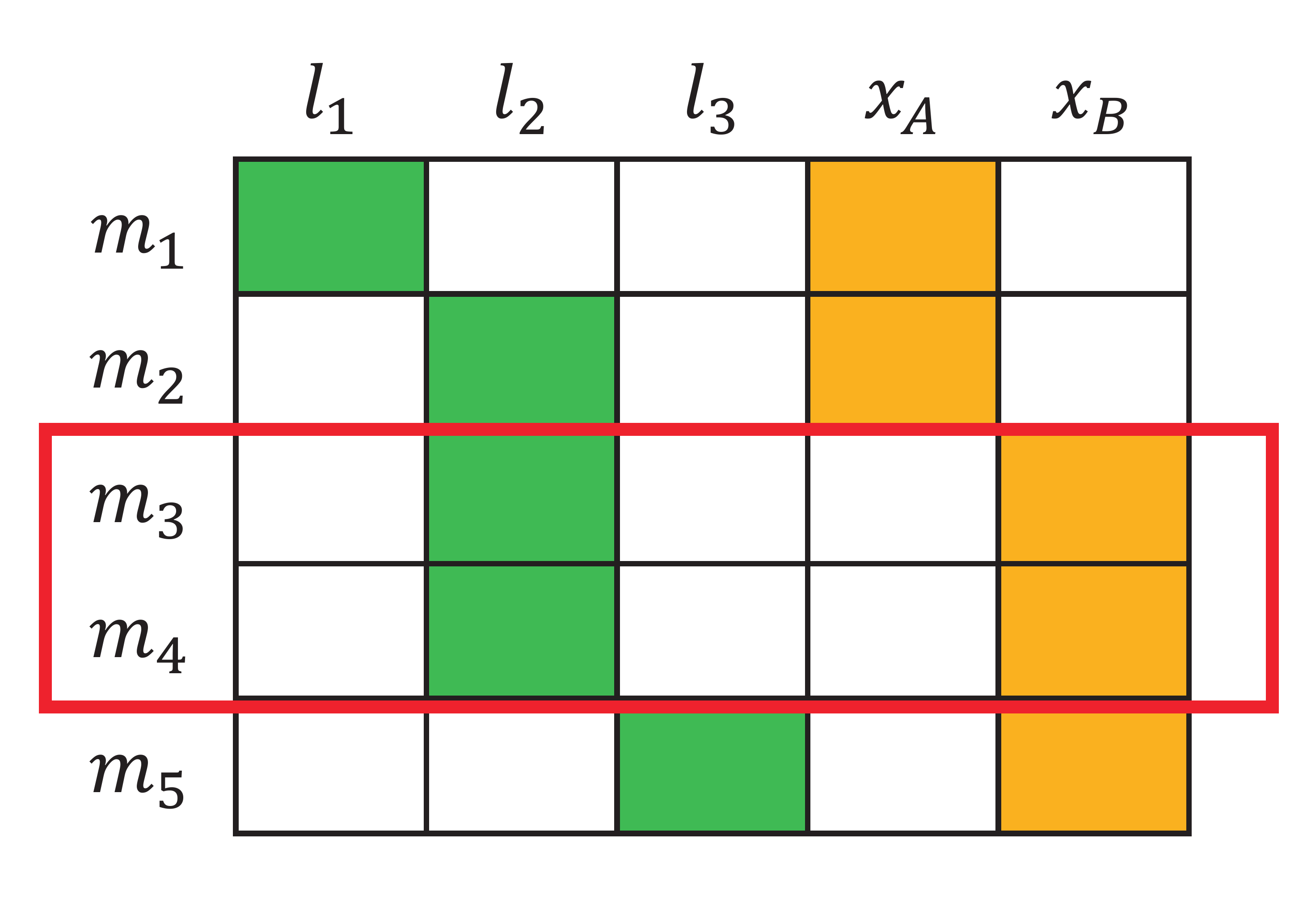}
		\end{minipage}
	}%
	\centering
	\caption{Illustration of Jacobian matrix in a wrong association example. (a) Expected Jacobian structure. (b) Actual Jacobian structure in the optimizer. The rows of actual Jacobian matrix related to $m_3$ and $m_4$ become linear correlated approximately here, leading the Hessian matrix to be non-positive definite.}
	\label{jacob}
\end{figure}
\begin{figure}
	\centering
	\includegraphics[width=0.2\textwidth]{./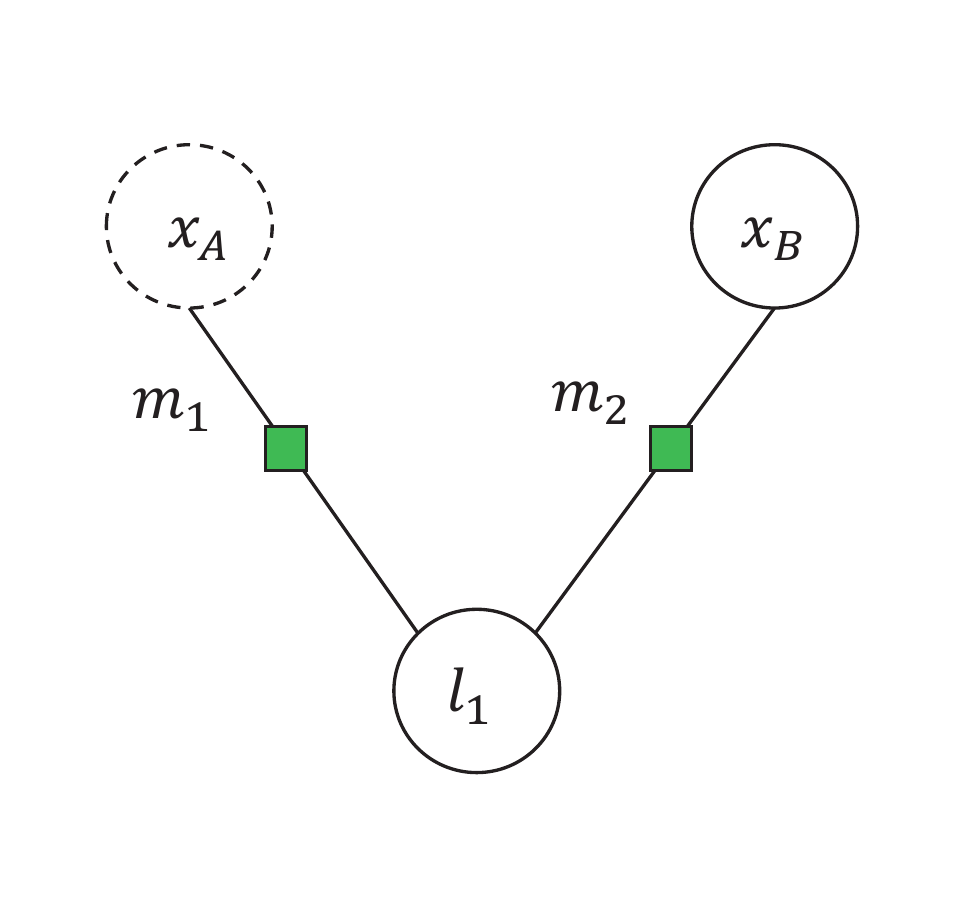}
	\caption{A local factor-graph illustrating feature/landmark triangulation.}
	\label{land}       
\end{figure}
\subsection{Landmark Degeneracy}
\label{sec32}
The landmark degeneracy cases (or called feature triangulation failure) are mainly caused by specific sonar motion \cite{huang2015towards}. Yang \textit{et al.} \cite{yang2017acoustic} had theoretically proved that in the 3D environment, any pure translational motion on $x, y$ directions or pure rotation around the $z-$axis, or any combination of the above motion would result in feature triangulation failure. Moreover, the features within other basic motions such as $x$ rotation, $y$ rotation, and $z$ translation required more sonar measurements to be triangulated.

Further, we give a complementary proposition and theoretical analysis based on these previous works, which will also be applied in this paper.

\begin{proposition}
	Consider a 2D imaging sonar with a narrow vertical aperture, the elevation angles of landmarks can be estimated using the planar assumption and method in \cite{negahdaripour20133}, which can eliminate the elevation ambiguity. Therefore, given the absolute (or reliable) onboard observations about z translation measured by depth meter and pitch/roll angles by the accurate inertial navigation system, the landmark can be triangulated during pure $x,~y$ motion or pure $z$ rotation (yaw), or the combinations of these motions, i.e., the triangulation failure/degeneracy in the three remaining motion can be eliminated.
	\label{prop1}
\end{proposition}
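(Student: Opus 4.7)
The plan is to show that the three degenerate sonar motions identified in \cite{yang2017acoustic,huang2015towards}---pure $x$ translation, pure $y$ translation, and pure yaw rotation about the sonar $z$-axis---no longer induce a rank-deficient landmark Jacobian once the hypotheses of the proposition have collapsed the effective ABA problem onto a plane.

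First I would invoke the narrow-aperture and planar-seafloor assumptions together with \cite{negahdaripour20133} to treat each elevation angle $\theta_j$ as a known quantity computed from the depth-meter height, the INS-derived pitch/roll, and the measured $(\psi_j,r_j)$. This trims one unknown per landmark. Simultaneously, fixing $t_{x_B}^z$, $\varphi_{x_B}$, and $\phi_{x_B}$ at their directly measured values removes three unknowns from the relative pose $x_B$. Hence the $6+3M$-dimensional state shrinks to $\tilde\Theta=\{\tilde x_B,\tilde l_1,\ldots,\tilde l_M\}$ with $\tilde x_B=[\psi_{x_B},t_{x_B}^x,t_{x_B}^y]^{\mathrm T}$ and $\tilde l_j=[\psi_j,r_j]^{\mathrm T}$, giving $3+2M$ unknowns against $4M$ scalar sonar residuals; the necessary counting condition $4M\ge 3+2M$ is satisfied as soon as $M\ge 2$, which is the 2D analogue of the condition in Section~\ref{sec31}.

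Second I would take the closed-form Jacobian blocks in \eqref{eq:jacobian} and delete the columns that correspond to the now-known pose and landmark components. Because $\partial\boldsymbol h(x_A,l_j)/\partial(\psi_j,r_j)=\mathbf I_{2\times 2}$, the upper half of every landmark's reduced Jacobian block is already rank $2$, so the full $4\times 2$ landmark block is of full column rank whenever the lower half $\partial\boldsymbol h(x_B,l_j)/\partial(\psi_j,r_j)=[\partial\boldsymbol h/\partial\mathbf q_j]\,\mathbf R_{x_B}^{\mathrm T}\,[\partial\mathbf p_j^l/\partial(\psi_j,r_j)]$ does not exactly cancel the upper one. I would evaluate each of the three matrices on the right-hand side in the reduced 2D setting and read off the result.

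Third I would specialize to each degenerate motion. For pure $(t_x,t_y)$ translation, $\mathbf R_{x_B}=\mathbf I$ and $\mathbf q_j-\mathbf p_j^l=-\mathbf t_{x_B}\neq 0$, so the bearing and range gradients with respect to $(\psi_j,r_j)$ at $x_B$ differ from those at $x_A$, yielding a rank-$2$ landmark block and, by inspection of the $\mathbf t_{x_B}$ columns, a full-rank overall $\tilde{\mathbf A}$. For pure yaw, $\mathbf t_{x_B}=0$ and $\mathbf R_{x_B}$ is a rotation about $z$; the landmark block at $x_B$ equals that at $x_A$ post-multiplied by this rotation, still rank $2$, while the $\psi_{x_B}$ pose column inherits the non-zero skew term $[\partial\boldsymbol h/\partial\mathbf q_j]\,\Omega(\mathbf q_j)\,\mathbf e_3$. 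Combinations follow by applying the Sylvester rank inequality to the pose/landmark block decomposition. Consequently $\tilde{\mathbf A}^{\mathrm T}\tilde{\mathbf A}$ is positive definite and each landmark is uniquely triangulated. The main obstacle will be the case analysis in this third step, especially near the sonar $z$-axis where $q_{jx}^2+q_{jy}^2\to 0$ and the projection Jacobian $\partial\boldsymbol h/\partial\mathbf q_j$ becomes singular; I would handle these by restricting the analysis to landmarks lying strictly inside $[\psi_{min},\psi_{max}]$ and away from the sonar $z$-axis, a restriction already enforced by the sonar imaging geometry and by feature-detection thresholds in practice.
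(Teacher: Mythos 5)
Your plan is correct in its essentials and follows the same route as the paper: use the depth/pitch/roll measurements and the planar, narrow-aperture elevation estimate to collapse the problem to 2D, and then verify non-degeneracy by a case-by-case rank check of the reduced Jacobian for pure $x$ translation, pure $y$ translation, pure yaw, and their combinations. The difference is one of scope. The paper's proof is deliberately narrower: it treats \emph{both} poses $x_A$ and $x_B=[\psi_{x_B},t_{x_B}^x,t_{x_B}^y]^{\mathrm T}$ as known and asks only whether the single $2\times2$ block $\mathbf A_l=\partial h(x_B,l_j)/\partial l_j$ is nonsingular, computing it explicitly for each motion (e.g.\ for pure yaw it comes out as $\bigl[\begin{smallmatrix}\cos\delta\psi & \sin\delta\psi/r_j\\ -r_j\sin\delta\psi & \cos\delta\psi\end{smallmatrix}\bigr]$, which is invertible). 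You instead pose the joint pose-plus-landmark problem with $3+2M$ unknowns and argue full rank of the whole reduced $\tilde{\mathbf A}$. That buys a stronger statement (joint observability, and it recovers the paper's $N_{Fth}=2$ counting bound as a corollary), but it is also the one place where your argument over-reaches relative to what is actually established: the claim that the pose columns are independent of the landmark columns is asserted ``by inspection'' and would in fact need $M\ge2$ landmarks in general position, none of which the proposition requires. Relatedly, your observation that the stacked landmark block is full column rank because its upper half is $\mathbf I_{2\times2}$ is trivially true for \emph{any} lower block and therefore says nothing about whether the second view is informative; the content of the paper's proof is precisely that the lower block alone is rank $2$, which you do also argue (correctly) for each motion. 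If you trim the joint-observability claims and keep only the explicit $2\times2$ computations for the measurement at $x_B$, your proof coincides with the paper's.
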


\begin{proof}
	Intuitively, we raise an elementary triangulation example of a feature/landmark. 
	As the local factor-graph shown in Fig.~\ref{land}, given two known poses $x_A$ and $x_B=[\psi_{x_B},t_{x_B}^x,t_{x_B}^y]^\mathrm T$, and the corresponding measurement factors $m_1$ and $m_2$,  the landmark $l_j$ is to be triangulated.
	{Thus, the sonar prediction function can be linearized to:}
	\begin{equation}
		h(l_j)=h(l_j^0+\Delta l_j)\approx h(l_j^0)+\mathbf{H}\Delta l_j,
	\end{equation}
	where $l_j^0$ is the linearization point taken as the feature coordinates from previous sonar observation. The $x_B$ is omitted in $h(\cdot)$ for brevity.
	It is worth noting that the unknowns in feature position $l_j =[\psi_j,~r_j]^\mathrm T$ to be estimated reduce to 2 and can be solved by the update vector $\Delta l_j$, and the related Jacobian matrix formed by one pose and one feature is $\mathbf{A}_l = \partial{h(l_j)}/{\partial{l_j}}\in \mathbb{R}^{2\times2}$. 

	1) \textit{Pure x motion}: In this case, considering a minor increment in $x_B$ to represent the pure $x$ motion in one time step yields:
	\begin{align}
		&\mathbf t_{x_B}=\left[\begin{array}{lll} \delta x  &0 \end{array}\right]^\mathrm T, \mathbf R_{x_B}=\mathbf I_{2\times2},\\ \nonumber
		&\mathbf{q}_j=\mathbf R_{x_B}(\mathbf{p}_j-\mathbf t_{x_B})=\left[\begin{array}{l} r_j \cos \psi_j-\delta x\\r_j \sin \psi_j \end{array}\right].
	\end{align}
	Therefore, based on Eq.~\eqref{eq:jacobian}, the current Jacobian is:
	\begin{align}
		&\mathbf{A}_l = \frac{\partial h(l_j)}{\partial l_j}=\begin{bmatrix} -\frac{q_{jy}}{q_{jx}^2+q_{jy}^2} &\frac{q_{jx}}{q_{jx}^2+q_{jy}^2}\\\frac{q_{jx}}{\|\mathbf q_j\|_2} &\frac{q_{jy}}{\|\mathbf q_j\|_2} \end{bmatrix}\frac{\partial{\mathbf q}_j}{\partial l_j}\\ \nonumber
		&=\begin{bmatrix} -\frac{r_j^2-r_j\delta x \cos \psi_j}{q_{jx}^2+q_{jy}^2} &\frac{-\delta x \sin \psi_j}{q_{jx}^2+q_{jy}^2} \\ \frac{r_j\delta x \sin \psi_j}{\sqrt{q_{jx}^2+q_{jy}^2}} &\frac{r_j-\delta x \cos \psi_j}{\sqrt{q_{jx}^2+q_{jy}^2}} \end{bmatrix},~rank(\mathbf{A}_l)=2.
	\end{align}
		
	Hence, the Jacobian matrix $\mathbf{A}_l$ and the corresponding Hessian matrix are both full-rank, which means that Eq.~\eqref{eq:normal} will be solvable uniquely and the landmark can be triangulated with an initial state $l_j^0=[\psi_j^0,~r_j^0]^\mathrm T$.
		
	2) \textit{Pure y motion}: This case is similar to the pure $x$ motion, so we give a brief result.
	\begin{align}
		&\mathbf t_{x_B}=\left[\begin{array}{ll} 0 \\ \delta y \end{array}\right], \mathbf R_{x_B}=\mathbf I_{2\times2}, \mathbf{q}_j=\left[\begin{array}{l} r_j \cos \psi_j \\r_j \sin \psi_j -\delta y \end{array}\right],\\ \nonumber
		&\frac{\partial h(l_j)}{\partial l_j}=\begin{bmatrix} -\frac{r_j^2-r_j\delta y \sin \psi_j}{q_{jx}^2+q_{jy}^2} &\frac{\delta y \cos \psi_j}{q_{jx}^2+q_{jy}^2} \\ \frac{-r_j\delta y \cos \psi_j}{\sqrt{q_{jx}^2+q_{jy}^2}} &\frac{r_j-\delta y \sin \psi_j}{\sqrt{q_{jx}^2+q_{jy}^2}} \end{bmatrix},\\ \nonumber
		&rank(\mathbf{A}_l)=2.
	\end{align}
		
	In this case, a full-rank Jacobian matrix $\mathbf{A}_l$ means the landmark can be triangulated uniquely too.
		
	3) \textit{Pure z rotation}: In this case, the translational vector $\mathbf t_{x_B}$ is zero, the rotation matrix and $\mathbf q_j$ are:
	\begin{equation}
		\mathbf R_{x_B}=\left[\begin{matrix} \cos \delta \psi & \sin \delta \psi \\-\sin \delta \psi & \cos \delta \psi \\  \end{matrix}\right], \mathbf q_j = \left[\begin{matrix} r_j \cos \psi_j\\r_j \sin \psi_j\end{matrix}\right].
	\end{equation}
	This yields:
	\begin{align}
		&\frac{\partial h(l_j)}{\partial l_j}=\left[\begin{matrix} -\frac{\sin \psi_j}{r_j} &\frac{\cos \psi_j}{r_j} \\ \cos \psi_j &\sin \psi_j \end{matrix}\right] 
		\left[\begin{matrix} \cos \delta \psi & \sin \delta \psi \\-\sin \delta \psi & \cos \delta \psi \\  \end{matrix}\right] \frac{\partial{\mathbf q}_j}{\partial l_j}\\ \nonumber
		&=\left[\begin{matrix} \cos \delta \psi & \frac{\sin \delta \psi}{r_j} \\-r_j \sin \delta \psi &\cos \delta \psi \end{matrix}\right].
	\end{align}
		
	In general, the incremental yaw angle $\delta \psi$ is quite small in a time step, i. e. $\sin \delta \psi \approx 0$ and $\cos \delta \psi \approx 1$, then we can get $rank(\mathbf{A}_l)=2$.
		
	4) \textit{Combination motion}: The landmark can also be triangulated within the combinations of the above primitive motion since the linear property. The detail is omitted here.
\end{proof}

\begin{remark}
	Note that the statement in \textit{Proposition 1} is equivalent to the reduced 2D case approximately.
\end{remark}
\begin{figure*}
	\centering
	\includegraphics[width=0.9\textwidth]{./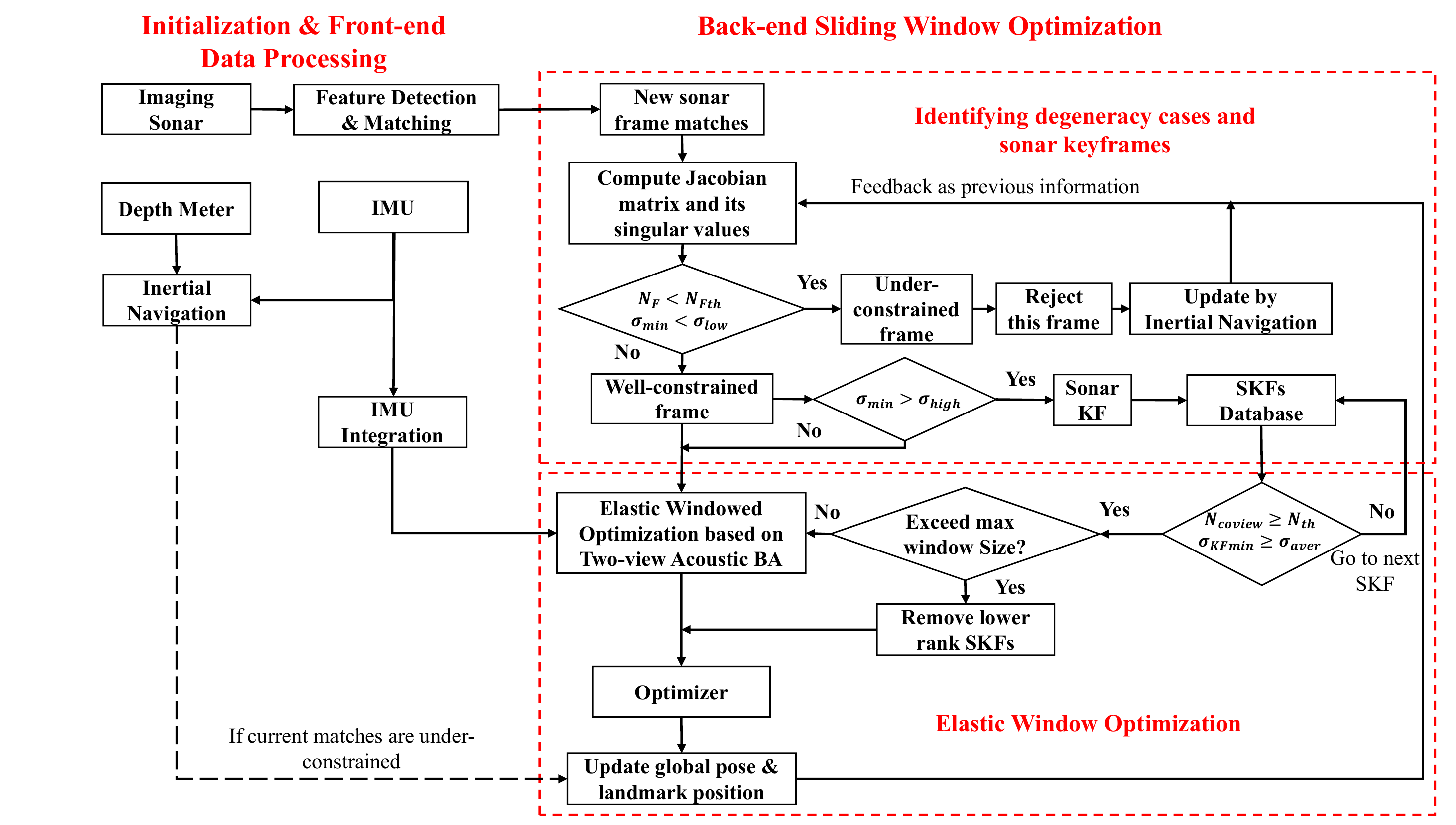}
	\caption{Block diagram of the proposed sonar KF-based elastic windowed optimization. As the main block, the back-end optimization includes under-constrained case detection, keyframes selection, and window size regulation.}
	\label{diag}       
\end{figure*}
%

For the degeneracy cases discussed above, we prefer to apply the singular-value decomposition (SVD) approach to solve this class of nonlinear state estimation problems since the good sensitivity of singular values to perturbations \cite{nocedal2006numerical,zhang2016degeneracy}.
Specifically, for an LLS problem, a minor singular value represents a less constrained update vector, i.e., a degeneracy case. This motivates us to utilize the singular values to discriminate the degeneracy cases and well-constrained cases in the following section.

\section{Sonar Keyframe-Based Sliding Window Optimization}
\label{sec4}
\subsection{Overview and Initialization}
\label{sec41}
Fig. \ref{diag} shows the proposed sonar keyframe-based elastic windowed optimization scheme.  
The initialization process mainly involves IMU calibration, IMU-Sonar extrinsic parameters estimation, and other sensors' initialization, which are omitted here. The front-end generates sonar features and feature matches between successive frames using A-KAZE and FLANN methods. The detected features in each frame are ordered with a unique id and saved to the feature database with pixel coordinates unless already in it or matched to existed features. The inertial navigation data between two consecutive sonar poses is completed by the method in Sec.~\ref{sec:ian}. 

\subsection{Identifying Degeneracy Cases and Sonar Keyframes}
\label{sec42}
As presented in \textit{Proposition 1}, given the reliable measurements on $z$, pitch, and roll motion, the degeneracy cases will only relate to pose ambiguity. 
A sonar frame would be regarded as an under-constrained one if: 
\begin{enumerate}
	\item the number of detected and matched features $N_{F}$ between two consecutive sonar frames is less than a threshold $N_{Fth}$; 
	\item the minimum singular value $\sigma_{min}$ of the Jacobian matrix $\mathbf{A}$ formed by poses and features in the two frames meets $\sigma_{min} < \sigma_{low}$.
\end{enumerate}

The under-constrained frames will be rejected outside of the sliding window optimization and the corresponding poses and landmarks will be updated only by inertial navigation. On the contrary, the current frame will be the SKF if $\sigma_{min}$ is larger than a threshold $\sigma_{high}$. The selected SKFs will be saved into the SKFs database as the candidate frames for the elastic window.
Thus we have already defined the saliency for sonar images to identify keyframe.

Note that the Jacobian matrix is computed in an analytical form and we get the singular values by SVD: $\mathbf{A}=\mathbf{USV^T}$, where $\mathbf{U}\in \mathbb R^{p\times p}$ and $\mathbf{V}\in \mathbb R^{q\times q}$ are orthogonal matrices, $\mathbf{S}=[\boldsymbol\sigma_q \ \mathbf 0_{p-q}]^{\rm T}\in \mathbb R^{p\times q}$ contains singular values $\sigma_1\geq \dots \geq\sigma_q$. 
{Generally, for a degeneracy case, the minimum singular value $\sigma_{min}$ of the Jacobian matrix is, inevitably, a positive number and very close to zero, which represents a under-constrained direction in the optimization, according to \cite{zhang2016degeneracy}. Therefore, $\sigma_{low}$ is set as a small positive empirically and the $\sigma_{high}$ is set to 5-8 times of the $\sigma_{low}$ to choose the SKFs.}
$N_{Fth}$ derives from the existence of solutions for the LLS problem mentioned in Sec. \ref{sec31}, that is, we need $M\geq N_{Fth}= 6$ in 3D scene and $M\geq N_{Fth}= 2$ in 2D scene to ensure the LLS equations to be well-constrained, or so-called non-underdetermined. 

\subsection{Adjusting the Elastic Window Size and Optimization}
\label{sec43}
Here we traverse the SKF database and add the qualified SKFs into the sliding window based on the following rules:
\begin{enumerate}
	\item the number of the same feature ids between the candidate KF and current frame needs to be larger than a threshold $N_{coview}\geq N_{th}$;
	
	\item On this basis, we rank the SKFs based on the magnitude of minimum singular value in descending order, the KF with a greater $\sigma_{min}$ than the average minimum singular value will be added into the window, i.e.:
	\begin{align}
		\sigma_{KFmin}\geq \sigma_{aver}=\frac{1}{N_S}\sum_s^{N_S}{\sigma_{min}^s}.
	\end{align}
\end{enumerate}

Considering maintain an appropriate optimization scale, the window size $N_S$ needs to be adjusted automatically in a range of $[2,N_{Smax}]$ based on two-view ABA. 
If the window size exceeds the boundary $N_{Smax}$, we only reserve the SKFs which rank top $N_{Smax}-2$, other SKFs are removed from the sliding window.
$N_{Smax}$ and $N_{th}$ are chosen, depending on the specific mission.
{Note that our method needs no additional landmarks even if the indispensable number of landmarks $N_{Fth}$ cannot be satisfied for a while, in which case we can use inertial navigation and past SKFs to improve the current accuracy.}

Once the sonar frames in the sliding window are determined, the NLS problem will be well-conditioned approximately, and can also be solved by, e.g., Gauss-Newton or Levenberg-Marquardt method in GTSAM \cite{GTSAM2017}.

\section{Results and Discussions}
\label{sec5}
To validate the proposed method, we first conduct two underwater simulations on the high-fidelity physical model-based simulation platform named Gazebo UUV Simulator \cite{manhaes2016uuv}, and then an evaluation on a public dataset. We compare our method with the two-view ABA, which was proposed in \cite{ShinBundle} and used in \cite{westman2019degeneracy}. All simulations and experiments are conducted on a desktop PC with a 3.6~GHz Intel i3-9100F CPU and 16G RAM.

In the first two simulations, as in Fig.~\ref{rov}, we use an ROV equipped with imaging sonar, IMU, depth meter, and other onboard sensors to follow a pre-defined 10~m $\times$ 10~m squared path. Note that this path is the combination of pure $x,~y$ motion, and pure $z$ rotation, the triangulation degeneracy will not exist given accurate depth, roll, and pitch measurements in this combined motion according to \textit{Proposition 1}.
The ROV keeps a relatively low speed and senses the fixed artificial landmarks along the path. Important parameters are listed in Table \ref{tab:param}. 
Note that since we aim to verify the open-loop performance,  the loop closure mechanism is not used in the following experiments, though our proposed method is compatible with it.
\begin{figure}
	\centering
	\includegraphics[width=0.38\textwidth]{./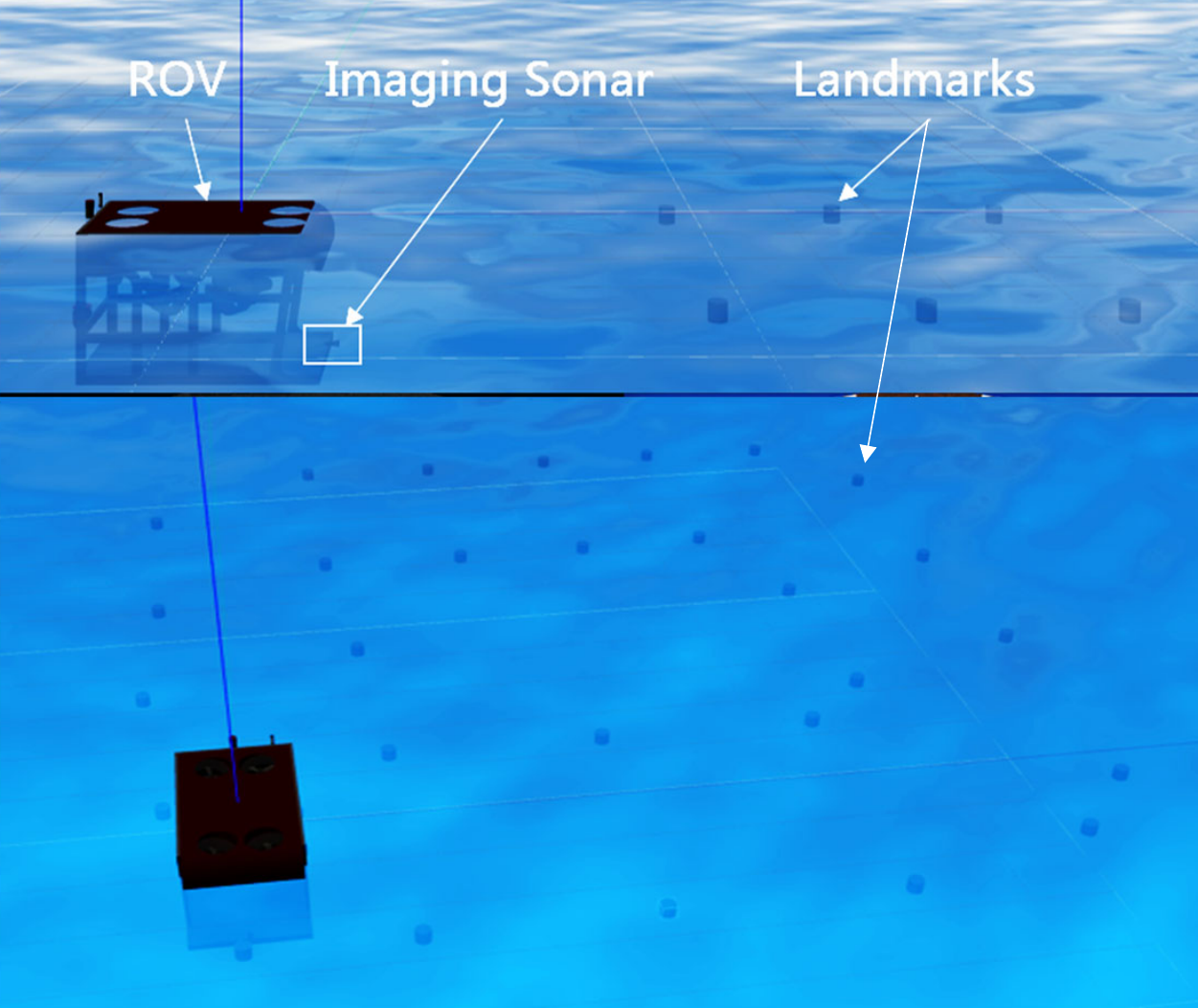}
	\caption{An ROV mounted with a forward-looking imaging sonar is collecting data along a squared path in the Gazebo UUV Simulator.}
	\label{rov}       
\end{figure}
\begin{table}[htbp]
    \centering
	\caption{Important Parameters.}
		\begin{tabular}{ll}
			\hline
			Items                                                         & Values           \\ \hline
			Sonar range {[}$r_{min}$,$r_{max}${]} (m)                     & [0.5,9]      \\
			Sonar bearing FOV {[}$\psi_{min}$,$\psi_{max}${]} (rad)       & [-$\pi$/4, $\pi$/4]                 \\
			Sonar elevation FOV {[}$\theta_{min}$,$\theta_{max}${]} (rad) & [-$\pi$/18, $\pi$/18]                 \\
			Sonar range noise $\sigma$ (m)                                & 0.05             \\
			Sonar bearing noise $\sigma$ (rad)                            & 0.02             \\
			Odometry translational noise $\sigma$ (m)                     & 0.05             \\
			Odometry rotational noise $\sigma$ (rad)                      & 0.02             \\
			Sonar position w.r.t ROV (m)                                  & {[}1.4,0,-0.6{]} \\
			Sonar azimuth w.r.t ROV (rad)                                 & {[}0,0,0{]}      \\ \hline
		\end{tabular}
	\label{tab:param}
\end{table}
Since we only concern about the horizontal pose and landmark estimation, we set the imaging sonar at the same depth as all landmarks, so these small elevation angles can be approximated to 0, in other words, this scene is similar to the planar 2D case. Therefore, the minimum number of tracked features is chosen as $N_{Fth}=2$. We also choose the values of $\sigma_{low}=0.13$ and $\sigma_{high}=0.8$ based on the analysis in Section V.~B and experimental experience. Besides, the maximum window size $N_{Smax}$ and the minimum number of co-viewed features $N_{th}$ is set to 5 and 4, respectively. Note that this 2D simplification would introduce additional feature position errors according to Eq.~(1).
As in Fig.~\ref{kaze}, the features in the consecutive simulated sonar images are detected and matched by A-KAZE \cite{alcantarilla2011fast} and FLANN \cite{muja2009fast} approaches in the front-end. We also use the ratio test (ratio among 0.4-0.6) to reject the wrong matches in the front-end coarsely and post-process the remaining ones using our scheme since outliers may still exist. We choose experimentally a ratio of 0.48 in the simulations.

\begin{figure}
	\centering
	\includegraphics[width=0.45\textwidth]{./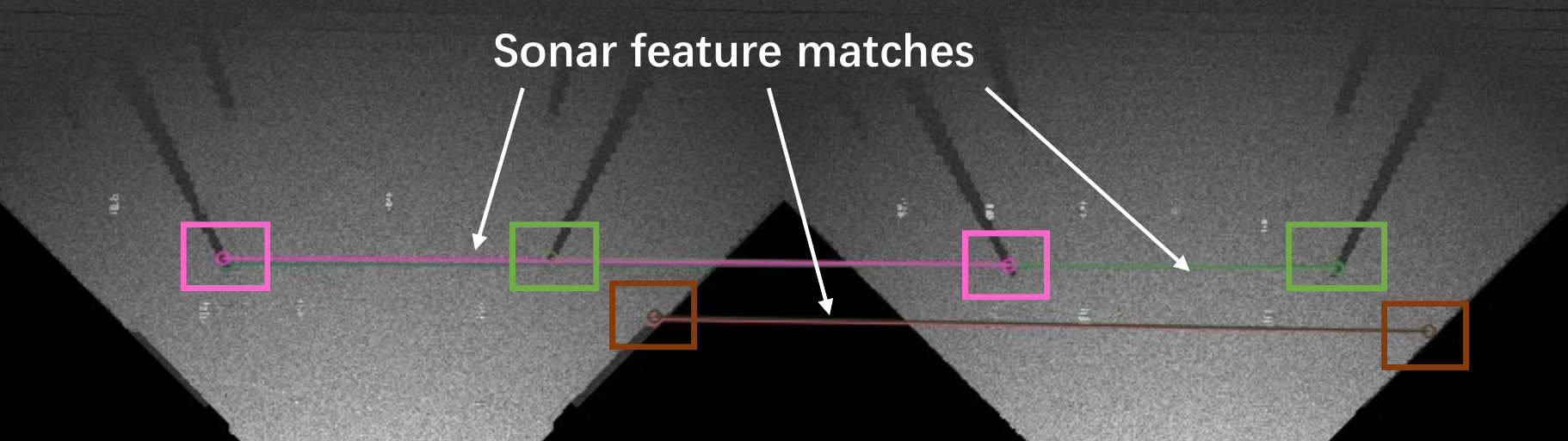}
	\caption{Acoustic feature matching in the ROI of simulated sonar images using the A-KAZE algorithm {after the ratio test}. This method performs better in the balance of time cost and accuracy rate than other methods such as ORB, SIFT, KAZE, and SURF \cite{westman2018feature}.}
	\label{kaze}       
\end{figure}
%
\begin{figure}
	\centering
	\includegraphics[width=0.4\textwidth]{./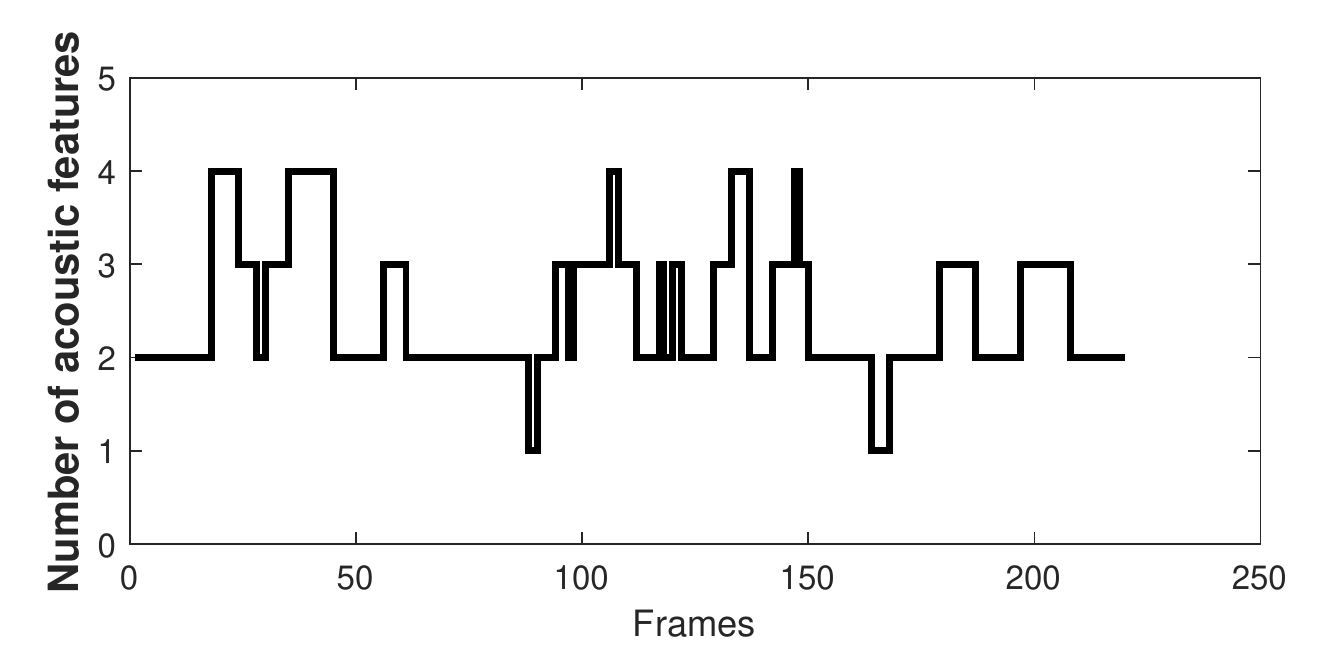}
	\caption{The number of available acoustic features at each frame after the ratio test. The number of features reduces to 1 at frames 88 - 89 and 164 - 167, less than the minimum requirement $N_{Fth}=2$, which implies the under-constrained frames appear.}
	\label{num_fea}       
\end{figure}
\begin{figure}
	\centering
	\subfigure[The top-down view of trajectories and landmarks]{
		\begin{minipage}[t]{0.8\linewidth}
			\centering
			\includegraphics[width=1.0\textwidth]{./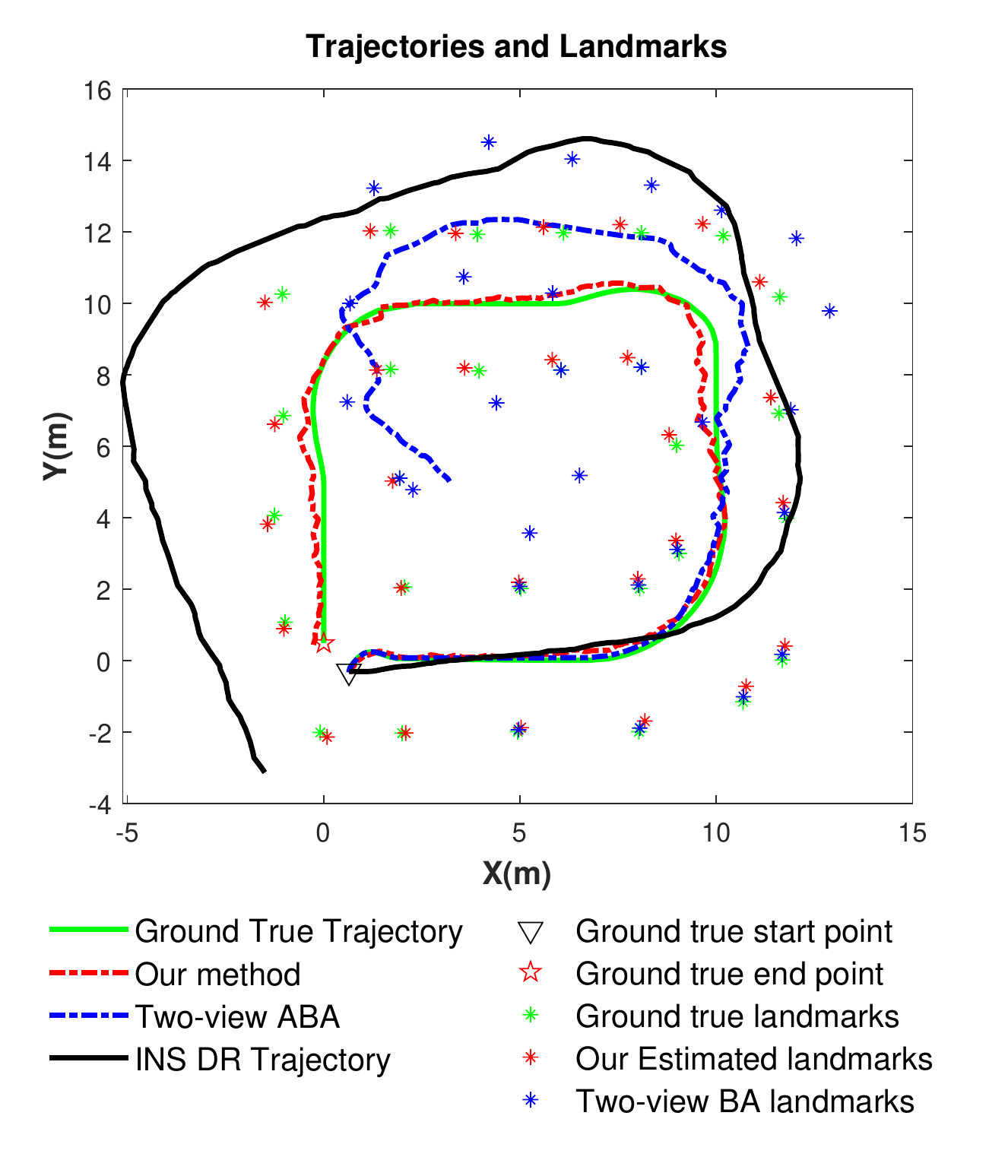}
		\end{minipage}%
	}%
	\quad
	\subfigure[Average trajectory position RMSE]{
		\begin{minipage}[t]{0.8\linewidth}
			\centering
			\includegraphics[width=1.0\textwidth]{./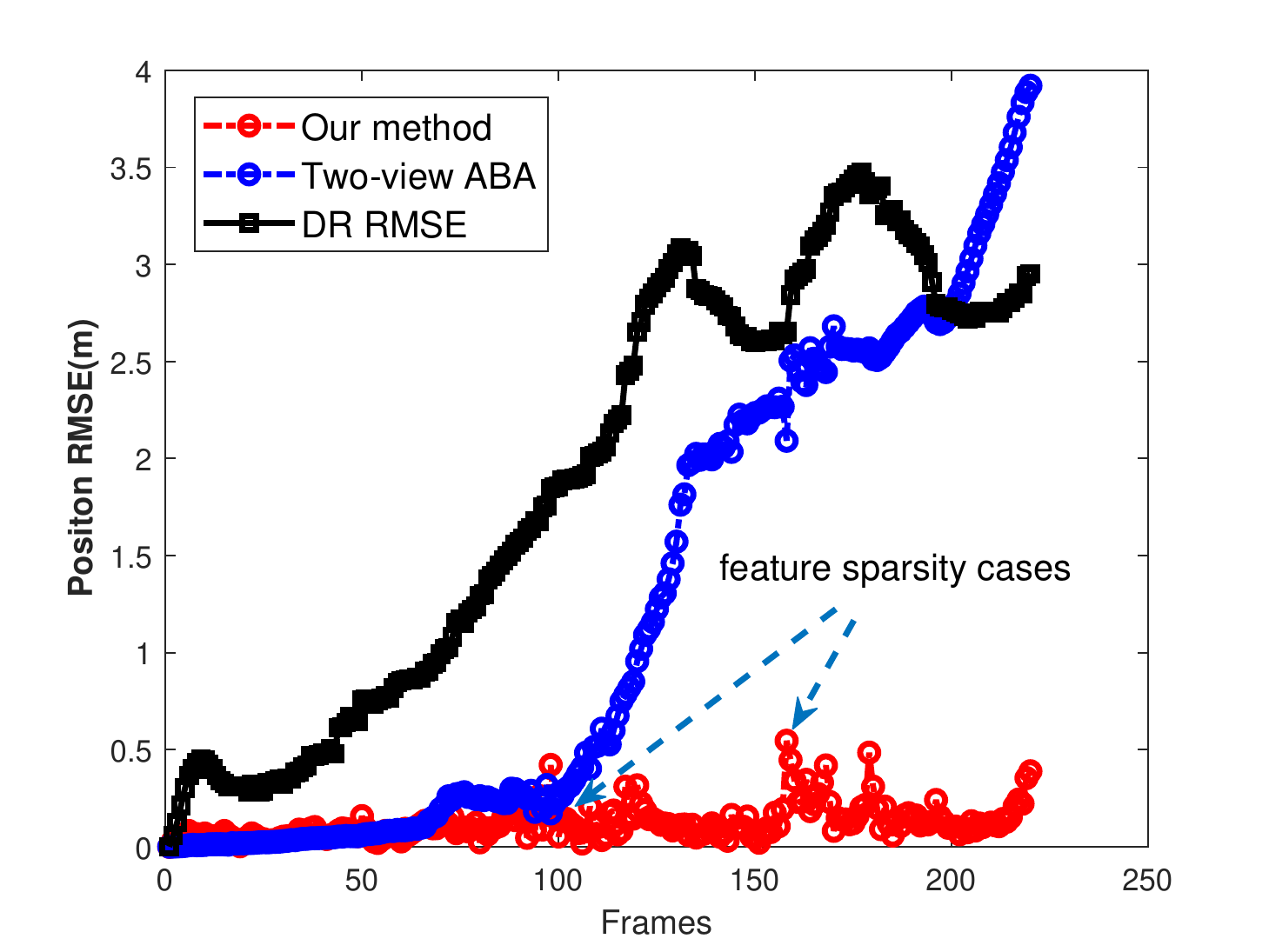}
		\end{minipage}%
	}%
	\quad
	\subfigure[Average landmark position RMSE and MAE]{
		\begin{minipage}[t]{0.7\linewidth}
			\centering
			\includegraphics[width=1.0\textwidth]{./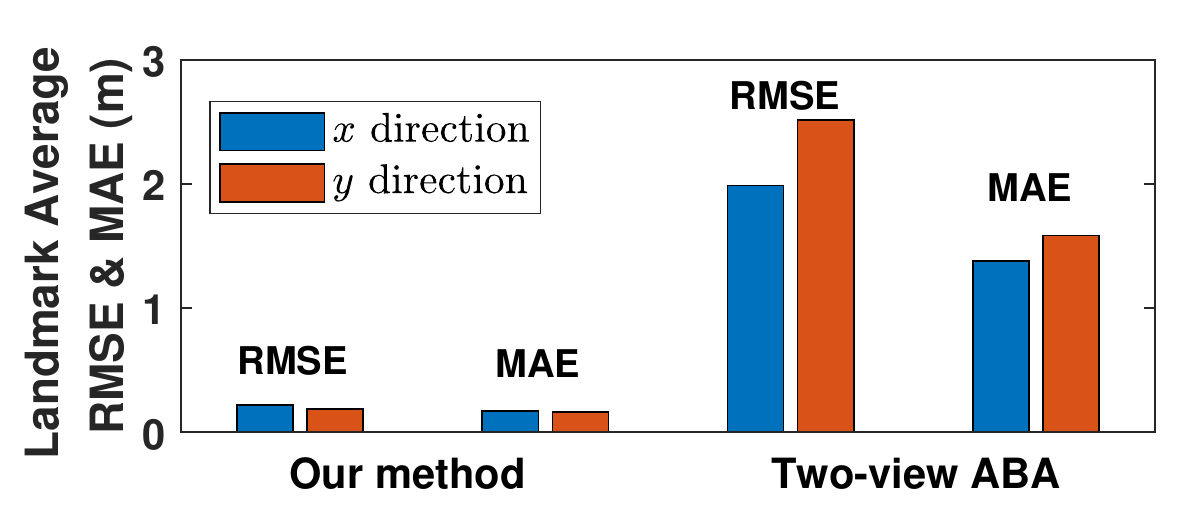}
		\end{minipage}%
	}%
	\caption{Simulation results of sparse features case.}
	\label{normal}       
\end{figure}
\subsection{Simulation A: Sparse Features}
\label{sec51}
To simulate the feature sparsity in the underwater environment, we place 28 small landmarks along the path and only 25 landmarks are measured. The number of detected acoustic features at each sonar frame is shown in Fig.~\ref{num_fea}. It's worth noticing that the number of features reduces to 1 at frames 88 - 89 and 164 - 167, less than the minimum requirement $N_{Fth}=2$ as analyzed in Sec.~\ref{sec32}, which would potentially cause undesired results. 

Fig.~\ref{normal}(a) compares the trajectories and landmarks estimated by our method and two-view ABA with the ground truth. The DR trajectory is also illustrated as a reference. Fig.~\ref{normal}(b) shows the corresponding RMSEs of different methods. Fig.~\ref{normal}(c) gives the RMSEs and mean absolute errors (MAEs) of all estimated landmarks at $x,~y$ directions. 

In the first one-third of the path, the proposed method outperforms the two-view ABA (both much better than the INS DR) since the well-constrained sonar keyframes are added into the elastic window. However, after encountering the first feature sparsity case at about frame 88, the ABA begins to accumulate larger errors both in pose and landmark estimation, then gets even worse and collapses when meeting the second sparsity case. According to the analysis in Sec.~\ref{sec3}, this phenomenon mainly attributes to the insufficient features and the resulting under-determined least square equations. 
In contrast, the RMSE curve of our method fluctuates when these two sparsity cases occur, but the error is always bounded within about 0.5 m even at the consecutive under-constrained frames 164 - 167 because of the presence of degeneracy identification.
Moreover, the continuous fluctuation in a period of RMSE after the sparsity cases occur exactly indicates the adverse effects of under-constrained frames on feature-based sonar localization and navigation.

Notably, our method limits the effects in a small range and shows the effectiveness on degeneracy cases.
The average position RMSE and MAE of detected landmarks also show the estimation accuracy of the proposed method.
\subsection{Simulation B: Add Wrong Associations}
\label{sec52}
To further verify the robustness of the proposed method, we add two wrong associations at frame 50 to simulate the wrong associations that occurred inevitably in the front-end. Specifically, we manually associate landmarks 6 and 7 to 1 and 8, respectively. 

In this case, as shown in Fig.~\ref{outlier}(a), the estimated trajectory of two-view ABA starts to deviate the path seriously after adding the wrong feature matches, the large deviation does not get corrected in the remaining path for a long time. The sudden increase and quick accumulation of trajectory RMSE in Fig.~\ref{outlier}(b) also evident this. 

On the contrary, the dashed rectangle in Fig.~\ref{outlier}(a) shows that the deviation in the trajectory of our method is corrected almost instantaneously and the pose estimation always keeps near consistent with the ground truth later. This mainly owes to the minimum singular value $\sigma_{low}=0.13$ mentioned earlier identifying the degeneracy cases. The RMSE curve of our method is shown in Fig.~\ref{outlier}(b). The overall average landmark position RMSE and MAE in Fig.~\ref{outlier}(c) are larger than the ones in Fig.~\ref{normal}(c), but the amplitudes are still limited to 0.5 m with our method, while the two-view ABA results in errors larger than 1 m in both experiments.

Hence, this experiment illustrates our method is robust and resilient to the outliers from the front-end.
As evident from these experimental results, the proposed method achieves the expected performance.

\begin{figure}
	\centering
	\subfigure[The top-down view of trajectories and landmarks]{
		\begin{minipage}[t]{0.8\linewidth}
			\centering
			\includegraphics[width=1\textwidth]{./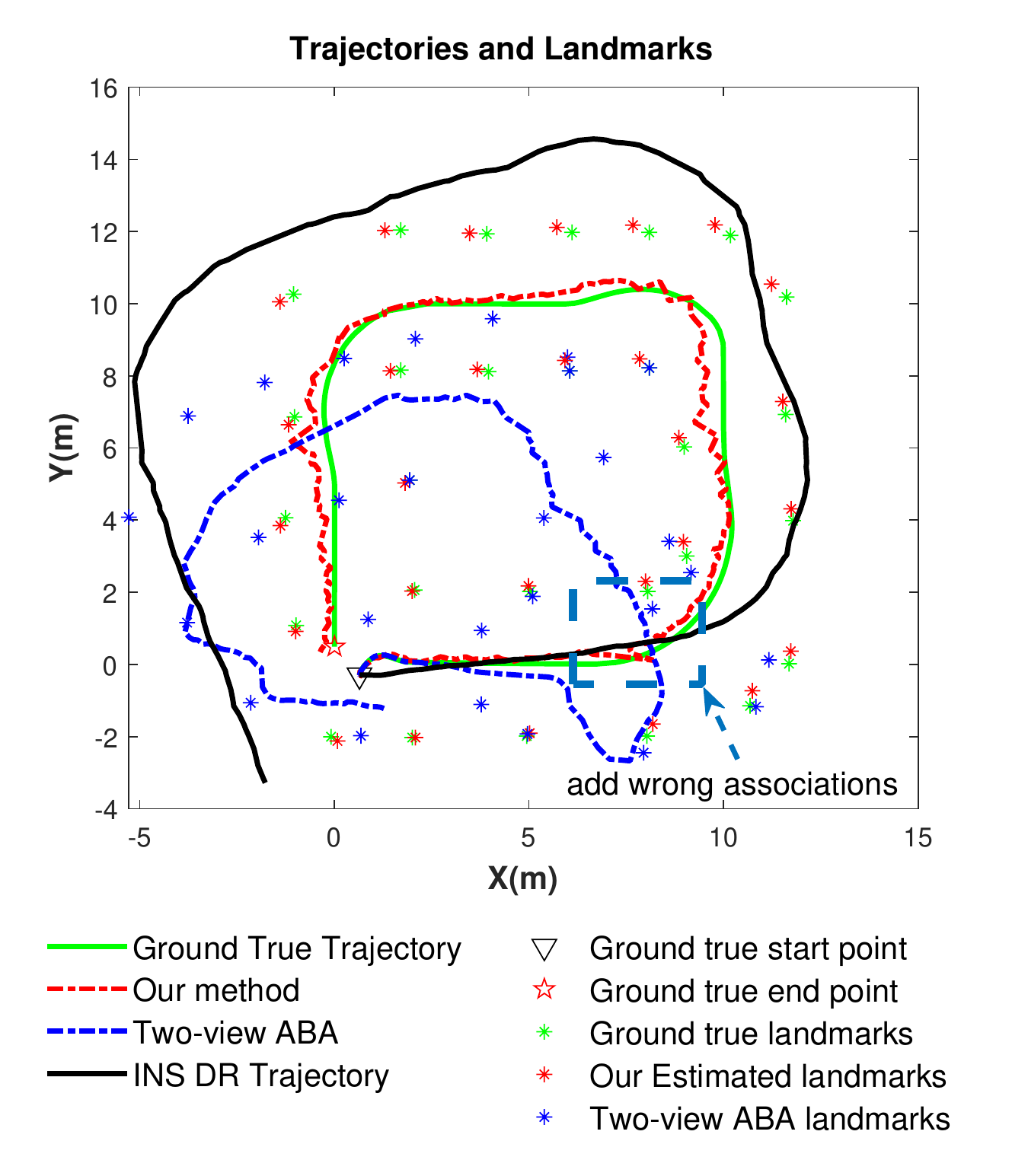}
		\end{minipage}%
	}%
	\quad
	\subfigure[Average trajectory position RMSE]{
		\begin{minipage}[t]{0.8\linewidth}
			\centering
			\includegraphics[width=1\textwidth]{./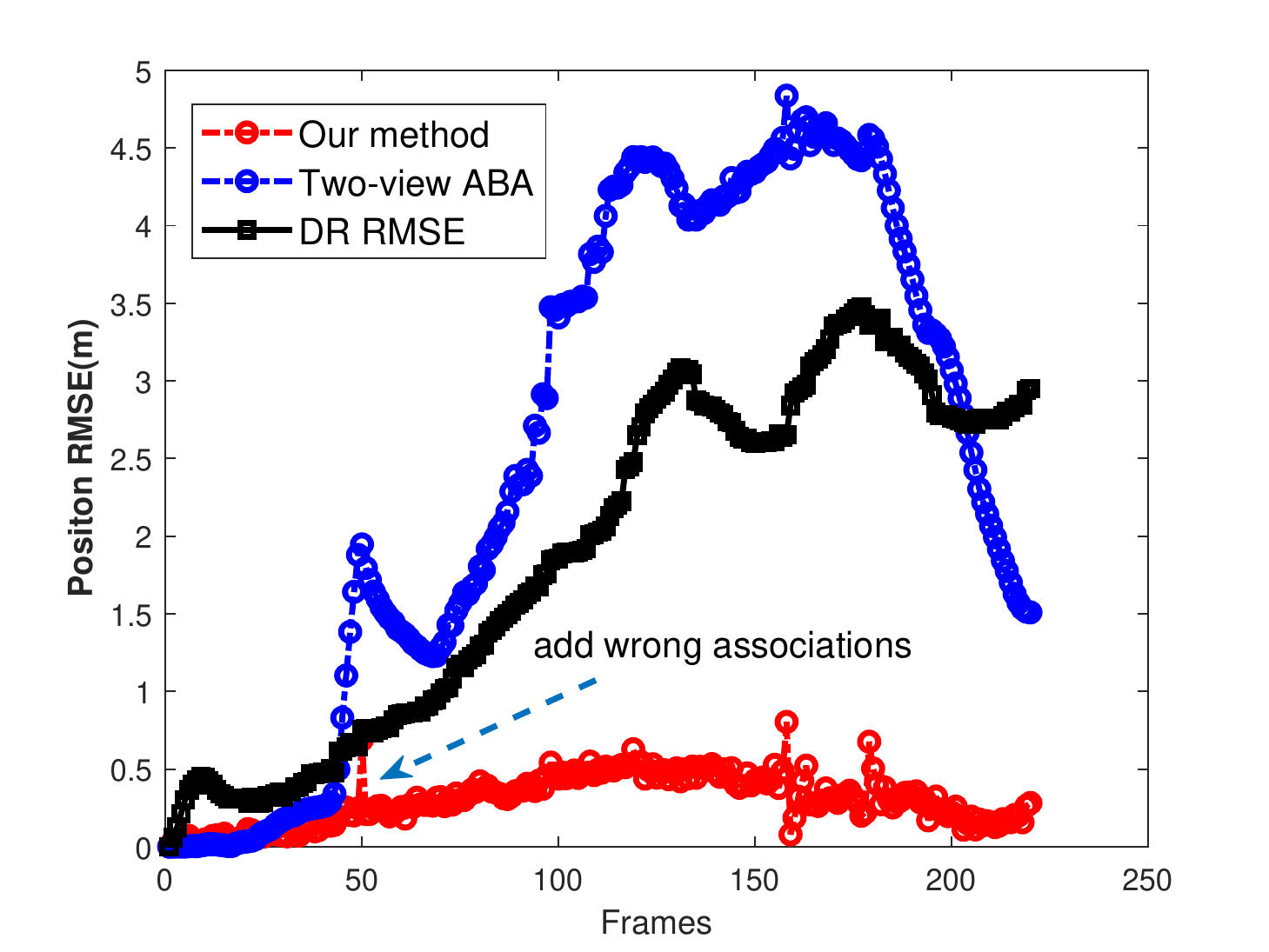}
		\end{minipage}%
	}%
	\quad
	\subfigure[Average landmark position RMSE and MAE]{
		\begin{minipage}[t]{0.7\linewidth}
			\centering
			\includegraphics[width=1\textwidth]{./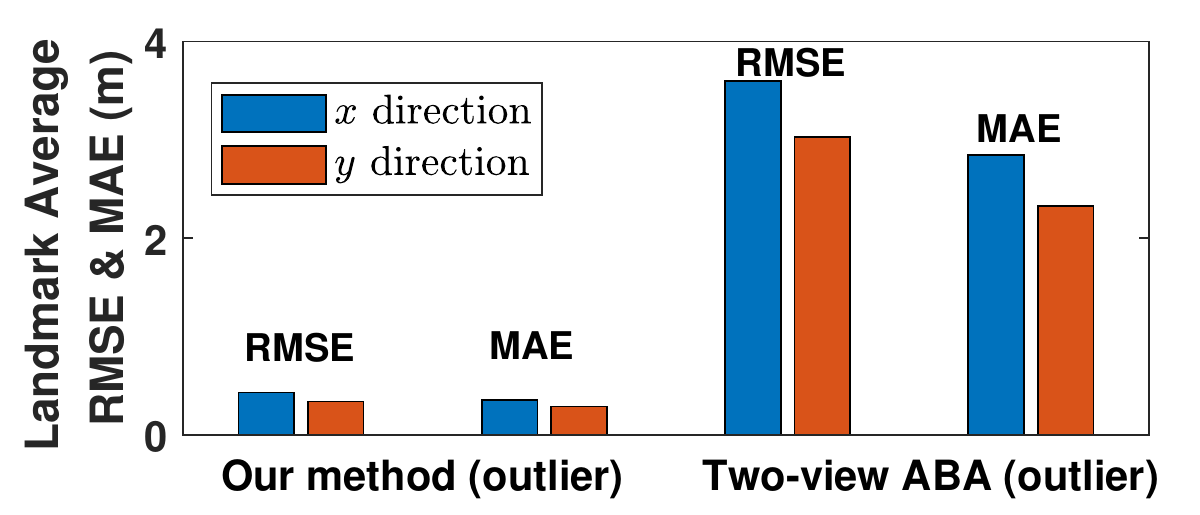}
		\end{minipage}%
	}%
	\caption{Simulation results of wrong association case.}
	\label{outlier}       
\end{figure}
\subsection{Evaluation on a Public Dataset}
To validate the practical benefit of our proposed method, we adopt the dataset collected using the Ictineu AUV by Ribas \cite{ribas2008underwater} in an abandoned marina (see Fig.~\ref{marina}(a)). {The vehicle was mounted with a Tritech Miniking imaging sonar and onboard navigation sensors such as a depth sensor and an Xsens MTi motion reference unit (MRU). The AUV traveled along a 600 m path with a relatively low speed (about 0.2 m/s) using more than 45 minutes. Note that this case can be also treated as a 2D case because of the above onboard sensors, so the collected path is approximated as the combination of pure $x,~y$ motion, and pure $z$ rotation, which accords with the statement of \textit{Proposition 1}.} The omnidirectional mechanically scanned imaging sonar was set to a maximum range of 50 m, a resolution of 0.1 m, and a step angle of 1.8$^\circ$. The data collected by a differential global positioning system (GPS) on a buoy was used as the ground truth.

Fig.~\ref{marina}(b) shows the resulting trajectories in this experiment, and the comparative position RMSEs of estimated trajectories are in Fig.~\ref{marina}(c). 
{The red markers ``+'' depict the sonar features extracted from sonar images along the path and the blue markers ``+'' represent the salient features can be chosen from the sonar features by a voting algorithm \cite{ribas2008underwater}, which compute the parameters of all candidate features through a voting model to get a complete voting space, then find the salient features with higher votes than a threshold. For more details please see \cite{ribas2008underwater}.}
Note that we use no loop closure detection in this experiment and we prefer to use position RMSE to measure the real performance since the features have no ground truth as reference {in the evaluation.}

{The error analyses of our method, two-view ABA and INS DR are list in Table~II. The absolute errors between the start and end point position of the above three methods are 2.247 m, 4.914 m and 32.818 m, respectively.}
{Similar to the former results, our proposed method shows the effectiveness and accuracy in field applications and performs better than others in pose estimation.}
\begin{figure}
	\centering
	\subfigure[Left: The Ictineu ROV \cite{ribas2008underwater}.~~
	Right: A satellite image of the abandoned marina from Google Earth.]{
		\begin{minipage}[t]{0.8\linewidth}
			\centering
			\includegraphics[width=1\textwidth]{./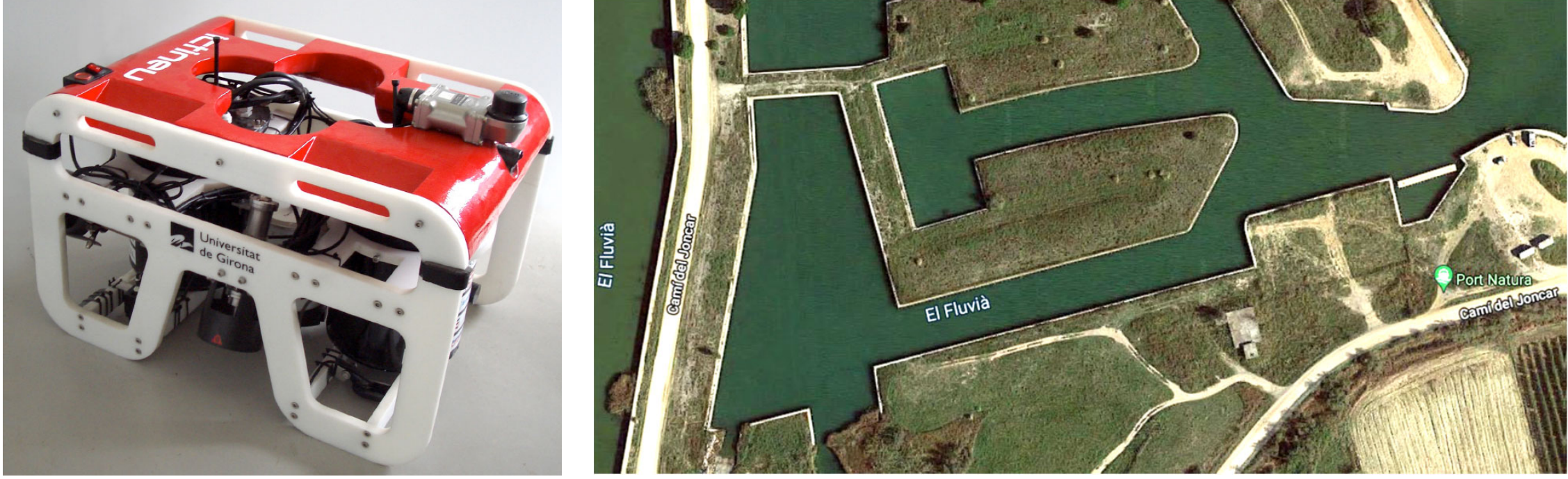}
		\end{minipage}%
	}%
	\quad
	\subfigure[The top-down view of estimated trajectories]{
		\begin{minipage}[t]{0.8\linewidth}
			\centering
			\includegraphics[width=1\textwidth]{./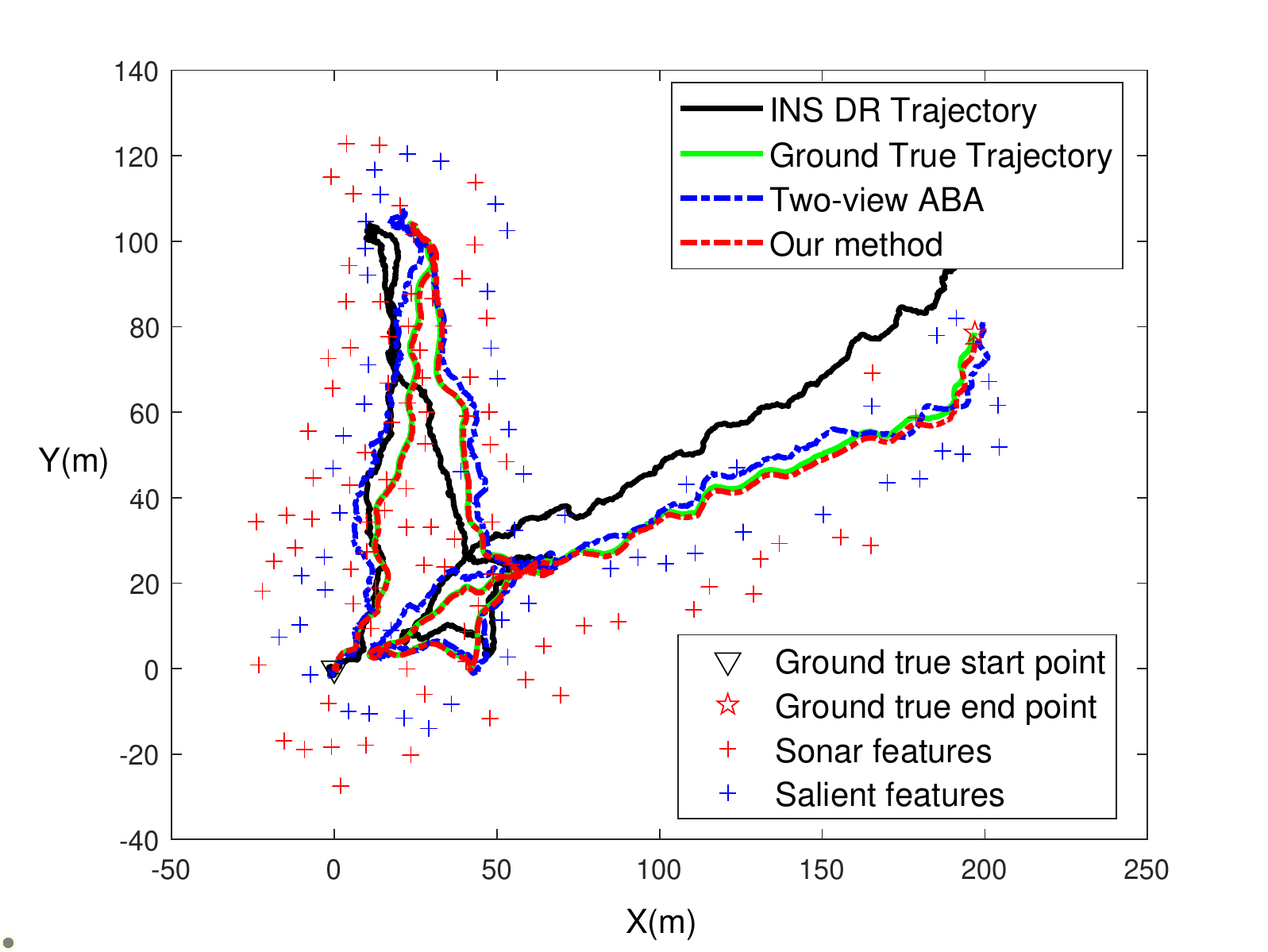}
		\end{minipage}%
	}%
	\quad
	\subfigure[Average trajectory position RMSE in marina dataset]{
		\begin{minipage}[t]{0.8\linewidth}
			\centering
			\includegraphics[width=1\textwidth]{./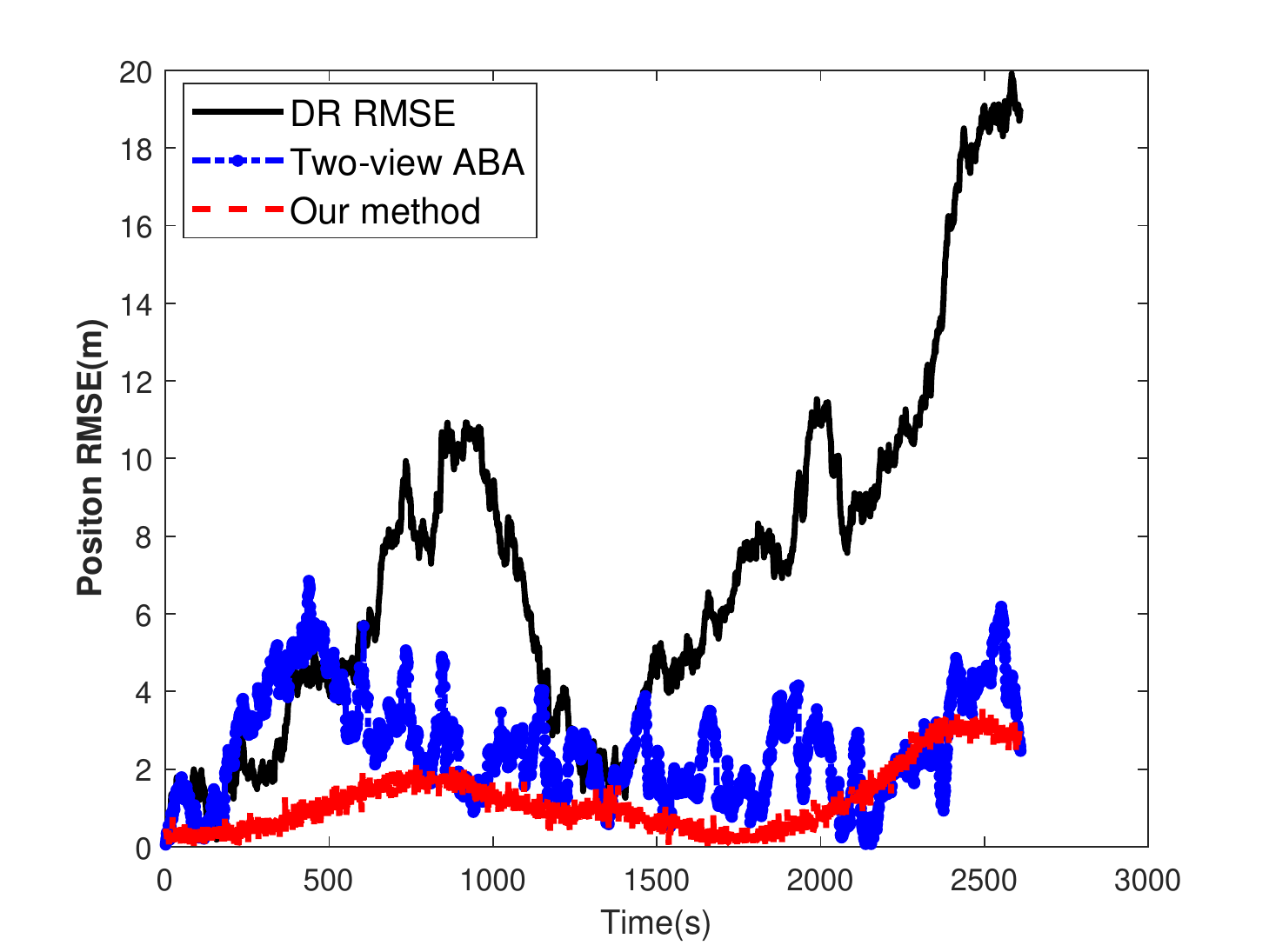}
		\end{minipage}%
	}%
	\caption{Experimental results of marina dataset at St. Pere, Spain.}
	\label{marina}       
\end{figure}
\begin{table}[htbp]
\centering
\caption{Error analysis of position RMSE in St. Pere dataset.}
    \begin{tabular}{|l|l|l|l|}
    \hline
    (Unit: meter)      & Our method               & Two-view ABA       & ~~~INS DR  \\ \hline
    mean $\pm$ std dev & 1.19 $\pm$ 0.84      & 2.57 $\pm$ 1.33        & 7.14 $\pm$ 4.66    \\ \hline
    {[}min, max{]} RMSE  & {[}0.02, 3.58{]} & {[}0.05, 6.85{]} & {[}0.11, 19.91{]} \\
    \hline
    \end{tabular}
\label{st-pere}
\end{table}
\begin{figure}
	\centering
	\includegraphics[width=0.4\textwidth]{./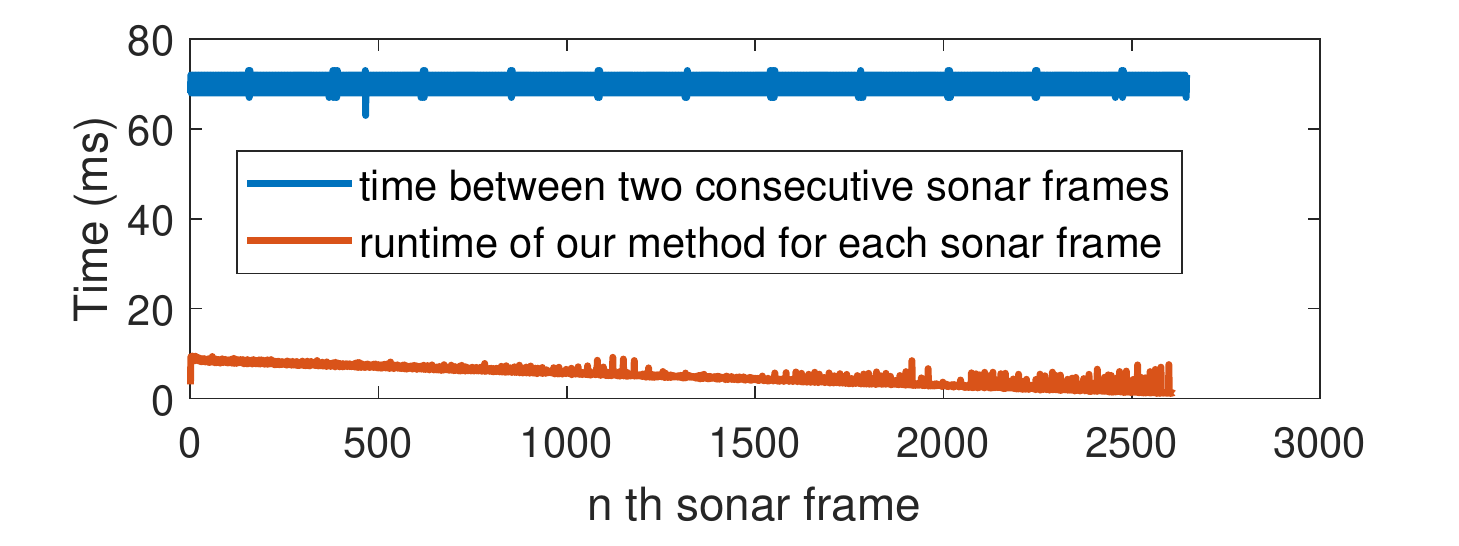}
	\caption{Runtime analysis of our method in the evaluation on the open dataset.}
	\label{runtime}       
\end{figure}
\subsection{Time Efficiency Analysis}
Moreover, as Fig.~\ref{runtime} shows, the execution time of our method {is lower than 10 ms (4.96 ms averagely), which is} far less than the time between two consecutive sonar scans (about 68-72 ms), this means that our method can process the current sonar frame earlier before a new frame comes. This also demonstrates the real-time performance of our proposed method intuitively.

More specifically, the front-end processing spends a major computational cost (about 3.7 ms averagely) of the proposed method, especially the feature extraction and matching, while the back-end cost less time since the sliding window optimization scale is relatively small.  
Additionally, the elastic window optimization module in the back-end spends an average time cost of 1.12 ms (about 89\% of the back-end cost).
Therefore, as the red line in Fig.~\ref{runtime}, the runtime diminishes as the salient features get sparse, {and the runtime fluctuates frequently at the later stage because the elastic window begins to add more SKFs to optimize, especially at the tail. However, the overall trend is still decreasing and no more than 10 ms,} showing the ability for online applications of the proposed method.
\section{Conclusions and Future Work}
\label{sec6}
This paper mainly contributes a novel sonar keyframe-based underwater autonomous localization and navigation method for low-cost AUVs. 
We raise the concept of sonar keyframes and design the selection criteria for discriminating well-constrained and under-constrained sonar frames. We also design an elastic sliding window optimization framework and add the well-constrained frames by certain rules. This framework allows us to fuse the inertial measurements and improve the potentially degenerate back-end optimization caused by under-constrained frames.
Our proposed method shows notable pose/landmark estimation improvement than two-view ABA and INS DR in simulations, as well as a 1.38 m and 5.95 m decreasing of RMSE than these two methods in a $110~m\times200~m$ trajectory estimation of the evaluation on the marina dataset. This method also shows real-time performance and robustness to outliers, and needs no additional landmarks even feature sparsity occurs.
Our future work mainly involves underwater mapping and conducting more field experiments.

\ifCLASSOPTIONcaptionsoff
  \newpage
\fi



\bibliographystyle{IEEEtran}
\bibliography{IEEEabrv,tim} 
\end{document}